\def\bbE{\mathbb{E}}
\def\bbR{\mathbb{R}}
\def\cD{\mathcal{D}}
\def\cE{\mathcal{E}}
\def\cF{\mathcal{F}}
\def\cL{\mathcal{L}}
\def\cN{\mathcal{N}}
\def\cR{\mathcal{R}}
\def\bx{\boldsymbol{x}}
\def\bz{\boldsymbol{z}}
\def\br{\boldsymbol{r}}
\def\bX{\boldsymbol{X}}
\def\bZ{\boldsymbol{Z}}
\def\bW{\boldsymbol{W}}
\newcommand{\cmark}{\ding{51}}%
\newcommand{\xmark}{\ding{55}}%
\newtheorem{thm}{Theorem}
\newtheorem{ass}{Assumption}
\newtheorem{col}{Corollary}
\newtheorem{condition}{Condition}
\newtheorem{prop}{Proposition}
\title{ZIN: When and How to Learn Invariance Without Environment Partition?}
\author{%
  Yong Lin \\
  HKUST \\%Hong Kong University of Science and Technology \\
\texttt{~~~ylindf@connect.ust.hk~~~}\\
  % examples of more authors
   \And
  ~Shengyu Zhu\thanks{Corresponding author.} \\
  ~~Huawei Noah's Ark Lab \\
  ~~\texttt{zhushyu@outlook.com} \\
   \AND
  Lu Tan \\
  Tsinghua University \\
   \texttt{tanl21@mails.tsinghua.edu.cn} \\
   \And
  Peng Cui \\
  Tsinghua University \\
   \texttt{cuip@tsinghua.edu.cn} \\
  % \texttt{email} \\
  % \And
  % Coauthor \\
  % Affiliation \\
  % Address \\
  % \texttt{email} \\
}
\begin{document}

\maketitle

\begin{abstract}
It is commonplace to encounter heterogeneous data,
of which some aspects of the data distribution may vary  but the underlying causal mechanisms remain constant.  When data are divided into distinct environments according to the heterogeneity, recent invariant learning methods have proposed to learn robust and invariant models using this environment partition. It is hence tempting to utilize the inherent heterogeneity even when environment partition is not provided. Unfortunately, in this work, we show that learning invariant features under this circumstance is fundamentally impossible without further inductive biases or additional information. Then, we propose a framework to jointly learn environment partition and invariant representation, assisted by additional auxiliary information. We derive sufficient and necessary conditions for our framework to provably identify invariant features under a fairly general setting. Experimental results on both synthetic and real world datasets validate our analysis and demonstrate an improved performance of the proposed framework. Our findings also raise the need of making the role of  inductive biases more explicit when learning invariant models without environment partition in future works. Codes are available at \url{https://github.com/linyongver/ZIN_official}.
\end{abstract}

\section{Introduction}
\label{sec:intro}
%\sz{another idea is to start with learning from heterogenous data? and then get back to IRM etc. Let met try a version first :)}

Machine learning algorithms with empirical risk minimization (ERM) generally assume independent and identically distributed (i.i.d.)~data in training and test sets. Due to changing circumstances, selection bias, or time shifts, data distributions are often heterogeneous across different environments in practical applications. When the distributions of training and test data are indeed different, model performance can severely degrade \citep{szegedy2014intriguing,Hendrycks2019Bench,Recht2019transfer}, as ERM based algorithms may exploit the spurious correlations that are particularly useful to ERM in the training data but do not exist in the test data.

To tackle the distribution-shift or  out-of-distribution generalization problem, a recent line of methods propose to utilize the causally invariant mechanisms (rather than the spurious correlations in the training data) that are supposed to be stable across different environments. For example, \citet{peters2016causal} proposes to exploit the invariance principle to learn a linear model that merely relies on the direct causes of the target. Such a model is shown to be robust to potential interventions on any variable except the target itself. \citet{Arjovsky2019Invariant} propose invariant risk minimization (IRM) to capture invariant
correlations by learning representations that elicit an optimal invariant predictor across multiple training environments. \citet {ahuja2020invariant,jin2020domain,krueger2021out,xie2020risk, chen2022invariance, chen2022pareto} further develop  variants of IRM by introducing game theory, regret minimization, variance penalization, multi-objective optimization, etc., and \citet{xu2021learning,chang2020invariant,linempirical} try to learn invariant features by coupled adversarial neural networks. Some recent works report that IRM is less effective when applied to large neural networks \cite{linempirical, gulrajani2020search}.  
\citet{Lin2020BIRM} find that this can be largely attributed to the overfitting problem, and \citet{Zhou2022SparseIRM,zhou2022model} propose to alleviate this issue through imposing sparsity constraint and sample reweighting, respectively.  

{Noticeably, the aforementioned invariant learning methods require datasets to be explicitly partitioned into environments (or domains) according to the heterogeneity underlying the data. 
%of the underlying distribution where the spurious feature varies. 
However, such an environment partition may be unavailable or hard to obtain in practice \cite{liu2021heterogeneous,creager2021environment}. Recently, a line of works try to learn invariance without environment indexes where the dataset is assembled by merging data from multiple environments. For instance, \citet{creager2021environment}  propose environmental inference for invariant learning (EIIL), a two-stage method by firstly inferring the environments with an ERM based trained biased model and then performing invariant learning on
inferred environments. \citet{liu2021heterogeneous} devise
an interactive mechanism, called heterogeneous risk minimization (HRM), with environment inference and invariant
learning on raw feature level, and \citet{liu2021kernelheterogeneous} extend 
it to representation level with kernelized trick.}

However, as we will show,
learning invariant models under this circumstance (with only input-label pairs) can be \emph{risky}, both theoretically and empirically. 
As our first contribution, in Section~\ref{sec:impossibility}, we prove that  learning invariant features from heterogeneous data without environment indexes is in theory impossible. Specifically, we provide a counter-example and a general theoretic result (Theorem~\ref{thm:impossibility}), to show that the causally invariant features are unidentifiable if no environment information is provided. This impossibility result, similar to the identifiability issue in causal discovery \citep{spirtes2000causation,Peters2017book}, independent component analysis (ICA) \citep{Hyvarinen2019nonliearICA},  and unsupervised learning of disentangled representations \citep{Locatelloc2019}, motivates us to consider when and how to learn   invariant features. 

%As such,  one has to introduce additional assumptions or certain  ``inductive bias'' to identify the invariant features. For example,  EIIL relies on the assumption that the model trained by ERM purely relies on  spurious features, which however is not guaranteed and even hard to verify. 

In this work, we turn to additional auxiliary variables that  encode some information about the latent heterogeneity. This strategy can be treated as a case between the two existing directions: 1) it is less restrictive than requiring ideal environment partition according
to distribution heterogeneity; and 2) it also obtains theoretic guarantee which is missing in the case with no environment
information at all. Notably, such auxiliary information is often cheaply available for every input in practice \citep{xie2021innout,rs12020207}. 
% \ly{For instance, in remote sensing,  auxiliary information can be obtained from additional data sources (e.g., climate data from
% other satellites) or from the original input (e.g., non-visible spectrum image channels).} 
Examples include time index of the data in time series forecasting tasks \cite{bose2017probabilistic,MUDELSEE2019310}, locations (longitude and latitude) of collected satellite data in remote sensing \cite{GISLASON2006294,Russwurm_2020_CVPR_Workshops}, and meta information associated with an instance \cite{liu2015faceattributes}. 
Based on the additionally observed variables, we proceed to propose a framework to jointly learn environment partition and invariant representation in Section~\ref{sec:ZIN}. Under a fairly general setting, we derive both sufficient and necessary conditions for our framework to identify invariant features. Extensions to other settings are discussed in  \Cref{app:moretheory}. %

Finally, \Cref{sect:experiment} presents experimental analysis on both synthetic and real world datasets. The proposed framework achieves an improved performance over existing methods like EIIL and HRM, and has a comparable performance to IRM with ground-truth environment partition.

\section{Related Work}

Invariant learning methods can be interpreted as robust to certain interventions or heterogeneity from a causal perspective \cite{peters2016causal}. On the other hand, heterogeneous data have also placed challenges and potential benefits in  causal inference tasks. For example,  \citet{Huang2020heterogeneous} consider heterogeneous datasets to identify variables with changing local mechanism and further the causal directions. Here confounders are assumed as functions of the domain index or time. \citet{pmlr-v15-tillman11a,NIPS2008_37bc2f75,Huang_Zhang_Gong_Glymour_2020} study the multiple dataset setting with non-identical sets of variables for causal discovery, where the datasets may have different exogenous noise distributions and  only part of the variables is present in each dataset. 

Besides the invariant learning methods described in \Cref{sec:intro}, another large class of methods for generalizing beyond training data is distributionally robust optimization (DRO) \citep{ben2013robust,shapiro2017distributionally, lee2017minimax, gao2020wasserstein, duchi2021learning, yi2021improved}. DRO methods propose to optimize the worst-case risk over a set of distributions close to the
training distribution.
% , or the so-called uncertainty set,  defined by some probabilistic distances like Wasserstein distance or $f$-divergence. Picking a suitable probability distance and range of uncertainty set is usually difficult in practice \citep{duchi2021learning}.
A notable instance of DRO methods is group DRO, which optimizes the worst-case loss over groups in the training data \citep{sagawa2020distributionally}. As with most invariant learning methods, group DRO generally requires a partition of groups obtained according to group annotation and label of each data sample.

Obtaining group annotations, however, can be costly or even impossible in practice. We may not know \emph{a priori} the inherent spurious features associated with data. Thus, a number of works have proposed to
infer group partition or directly identify the minority group by looking at features produced by  biased models, e.g., \cite{Sohoni2020NoSL,Ahmed2021SystematicGW,creager2021environment,Liu2021JustTT}. Notice that sometimes  a small number of labelled annotations are still needed to find a proper trained model, which is different from the setting considered in this work. For example, \citet{Liu2021JustTT} upweights samples with high losses from the initial ERM model and  relies on a small validation set of annotated data to tune parameters. Additionally, \citet{Nam2020LearningFF,Sanh2021LearningFO} try to obtain a more robust model by boosting from  the wrongly specified samples, based on the observation that models with limited capacity tend to learn spurious correlations or shortcuts. Unfortunately, as we will show in \Cref{sec:impossibility}, it is in theory impossible to distinguish spurious or invariant features from only the observed data consisting of input-label pairs.

\section{Preliminaries}
Throughout this paper, upper-cased letters such as $\bX \in \bbR^d$ and $Y \in \bbR$ denote random variables, and lower-cased letters such as $\bx$ and $y$ denote deterministic instances. Suppose there exist invariant features $\bX_v \in \bbR^{d_v}$ and spurious (non-invariant) features $\bX_s \in \bbR^{d_s}$. We observe a scrambled version  $\bX = q(\bX_v, \bX_s)\in\mathbb R^d$ with $q$ being an injective function. Consider the dataset $\cD := \{ (\bx_i, y_i)\}_{i=1}^{n}$ where data are collected from multiple environments $e \in \cE_{supp}$. We use superscript $e$ to indicate the environment index with a variable, e.g.,  $\bX^e$ and $Y^e$, when it is useful to make the index explicit. Notice that the true environment index $e$ is not provided in training, unless otherwise stated.

In this work, we assume an underlying structural causal model (SCM)  governing the data generation process \citep{PearlCausality,Peters2017book}. An SCM considers a set of variables associated with vertices of a directed acyclic graph, where directed edges represent direct causation. Each variable is obtained as a result of an assignment of a deterministic function depending on the parental variables in the graph and an exogenous random variable. As an example, we present the following data generation process that is also considered in \cite{ Ahuja2020IRMsamplecomplexity,Arjovsky2019Invariant}:
% \begin{align}
% Y&=f(Z_1^e, \epsilon_1^e), Z_1^e\indpt\epsilon_1^e,\\
% X^e&=g(Z^e_1, Z^e_2, \epsilon_2^e), (Z_1^e, Z_2^e)\indpt \epsilon_2^e.
% \end{align}
\begin{equation}
\label{eqn:data_generation_process}
\begin{aligned}
Y^e=g_v(\bX_v^e, \epsilon_v),~~\bX_v^e \perp\epsilon_v;\qquad \bX^e = q(\bX_v^e, \bX_s^e),
\end{aligned}
\end{equation}
where $g_v$ denote a non-degenerate deterministic function and $\epsilon_v$ is an independent noise variable.  We  observe data $\{\bX^e, Y^e\}_{e=1}^E$ from multiple environments, each with probability $P(\bX^e,Y^e)$. Then the marginal distribution of the mixed data can be represented as $P(\bX, Y) =\sum_{e=1}^{E}\alpha_e P(\bX^e, Y^e)$ for some $\alpha_e> 0$ and $\sum_{e=1}^{E}\alpha_e=1$.

We consider the model as the composition of a feature extractor $\Phi$ and a classifier (or predictor) $f_\omega$ that is parameterized by $\omega$. To seek for out-of-distribution generalization ability, we hope that $\Phi$ merely encodes the information of invariant features. Let $\ell(\cdot,\cdot)$ denote a loss function such as cross-entropy loss and squared error. Our goal is to learn a robust and invariant model that minimizes the following objective over all the considered environments $e \in \cE_{supp}$:
\[ \sup_{e \in \cE_{supp}} \mathbb E_{\bX^e, Y^e}\left[\ell\left(f_\omega(\Phi(\bX^e)), Y^e\right)\right].\]

While there is  no single, common definition of \emph{invariant feature} in the literature,  we treat the direct causal parents of $Y$ as invariant features, as in \cite{peters2016causal, Arjovsky2019Invariant}. This is because the conditional probability of the outcome $Y$ given its parental variables or direct causes remains unchanged with any intervention (not on $Y$). In contrast, if a non-parental variable is included in the conditioned variables, then the conditional probability could change after some intervention. As such, the underlying SCM governing data generation determines the desired invariance.

% \sz{todo: add some explanation? and why we consider such a model? shall we put $\bX_v in \bX_s$;
% IRM?empirical evaluation of;
% todo: add Xv in Xs}

%This can include the previous setting by setting $g$ to be an identity function and $\epsilon_2^e=0$. Assume that we have multi domain data as $\{X^e, Y^e\}_{e=1}^E$ with probability denoted as $p(X^e,Y^e, Z_1^e, Z_2^e),e=1,2,\ldots,E$. As a mixture, we may assume the data as $(X,Y)\sim p(X,Y, Z_1, Z_2)=:\sum_{e=1}^E\lambda_e p(X^e, Y^e, Z_1^e, Z_2^e)$, for some $\lambda_e\geq 0$ and $\sum_{e}\lambda_e=1$. Observation: $p(X,Y)$
%\section{The Difficulty to Identify Invariant Features with a Mixture of Environments(with @Shengyu)}
\section{Impossibility Result}
\label{sec:impossibility}
The first question we ask is whether it is possible to learn invariant models from the heterogeneous data of multiple environments with unknown environmental indexes. The following example immediately  gives a negative answer.

%\sz{seesm to lack: what is ratio of data numbers of each env?}
\begin{wrapfigure}{r}{.45\textwidth}
\vspace{-2em}
\begin{equation}
\label{eqn:counterexample}
%\small
  \begin{cases}
  Y=0,~~w.p.~0.5,\\
   Y=1,~~w.p.~0.5,\\
    X_2= X_1 = Y,~~w.p.~0.6375, \\
    X_2 \neq X_1 = Y,~~w.p.~0.1125, \\
     X_1 \neq  X_2 = Y,~~w.p.~0.2125, \\
     X_2 = X_1 \neq Y,~~w.p.~0.0375.
  \end{cases}
\end{equation}
%\vspace{-1em}
\end{wrapfigure}
\noindent{\bf  Example.}~~Consider that there are two environments $e \in \{1, 2\}$ with $\alpha_1=\alpha_2=0.5$, and we observe binary-valued features $X_1, X_2$ and label $Y$. Suppose the joint distribution of mixed environments is given in Eq.~\eqref{eqn:counterexample}.
A learning algorithm that tries to learn invariant feature based on this dataset would result in a model that (deterministically) depends on either $X_1$, $X_2$, or both to predict $Y$. 
However, as discussed in last section, whether the learned features are invariant  is determined by the underlying data generation process. We now present two possible data generation processes that  generate the same distribution yet  have different invariant features:
    %     y^e_i &\sim \text{Bernoulli}\left(\frac{1}{2}\right),\\
    %     x^e_{\text{inv}, i} &\sim
    %     \left\{\begin{array}{lr}
    %         \mathcal{N}_{d_\inv}(+\gamma, 10^{-1}) & \text{ if } y^e_i = 0,\\
    %         \mathcal{N}_{d_\inv}(-\gamma, 10^{-1}) & \text{ if } y^e_i = 1;\\
    %     \end{array}\right.\\
    %     x^e_{\text{spu}, i} &\sim
    %     \left\{\begin{array}{lr}
    %         \mathcal{N}_{d_\spu}(+\mu^e, 10^{-1}) & \text{ if } y^e_i = 0,\\
    %         \mathcal{N}_{d_\spu}(-\mu^e, 10^{-1}) & \text{ if } y^e_i = 1;\\
    %     \end{array}\right.
    % \end{align*}
   \begin{itemize}[leftmargin=10pt]
 \item $X_1$ is the invariant feature while $X_2$ is spurious, where $p_s^{e=1}=0.8$ and $p_s^{e=2}=0.9$:
       \begin{align}
       \label{eqn:exp11}
       X_1&\sim \text{Bernoulli(0.5)}, \quad Y =  \left\{\hspace{-4pt}\begin{array}{lr}
           X_1, & {w.p.}~0.75,\\
           1 - X_1, & {w.p.}~0.25,\\
        \end{array}\right. \quad
        X_2^e =  \left\{\hspace{-4pt}\begin{array}{ll}
           Y, & w.p.~p_s^e,\\
           1 - Y, &w.p.~1-p_s^e.\\
        \end{array}\right.       
        \end{align}     
 \item $X_2$ is the invariant feature while $X_1$ is spurious where $p_s^{e=1}=0.8$ and $p_s^{e=2}=0.7$:
       \begin{align}
       \label{eqn:exp12}
        X_2&\sim \text{Bernoulli(0.5)}, \quad Y =  \left\{\hspace{-4pt}\begin{array}{lr}
           X_2, & {w.p.}~0.85,\\
           1 - X_2, & {w.p.}~0.15,\\
        \end{array}\right.\quad
        X_1^e =  \left\{\hspace{-4pt}\begin{array}{ll}
           Y, & w.p.~p_s^e,\\
           1 - Y, &w.p.~1-p_s^e.
        \end{array}\right.
        \end{align}
   \end{itemize}
The joint distribution of the mixed environments for each data generation process is consistent with Eq.~(\ref{eqn:counterexample}). Thus, from the joint distribution $P(X_1, X_2, Y)$, a learned model only depends on either $X_1$, $X_2$ or both, and fails to generalize for at least one of the two data generation processes. On the other hand, when the partition is given, one can verify that IRM would correctly identify $X_1$ (resp.~$X_2$) as the invariant feature for the first (resp.~second) scenario. 

The above toy example is inspired by commonly-used classification tasks in the IRM literature, e.g., CMNIST \cite{Arjovsky2019Invariant} and CifarMnist \cite{linempirical}. For instance, in CMNIST,  invariant feature $X_1$ denotes the semantic feature of the shape of hand-written digits `0' and `1', and  spurious feature $X_2$ represents the color, which is either red or green. Label $Y\in\{0,1\}$ corresponds to the digit shape and is also binary. Indeed, the construction procedure of CMNIST in \cite{Arjovsky2019Invariant} can be exactly described by the data generation process in Eq.~\eqref{eqn:exp11}.  To further demonstrate the impossibility result,  we next conduct an empirical validation on a new dataset in accordance to the data generation process in Eq.~\eqref{eqn:exp12}

\noindent{\bf{Empirical Validation}.}~~We construct a variant of CMNIST according to  the second data generation process. In this new dataset,  color is the invariant feature and digit shape is spurious.  We use the same notations where  $X_1$ stands for  digit shape, $X_2$ is the color, and $Y$ denotes label. As described in Eq.~\eqref{eqn:exp12},  the label corresponds to the digit color and the digit shape is spuriously correlated with the label. We name this variant of CMNIST as MCOLOR (short for MNIST-COLOR).  In the test domain, we set $p_s^{e=3}=0.1$ to simulate the distributional shift, same as in CMNIST.  We can then compare ERM, EIIL, IRM with environment partition, and LfF (learning from failures) \cite{Nam2020LearningFF} that tries to learn a robust model by boosting from  wrongly specified samples of shallow neural networks. The empirical results are reported in Table~\ref{tab:mcolor_results}. For ERM (oracle), we train the model only on the invariant feature, i.e., digit shape in CMNIST and color in MCOLOR. 

We can see that the EIIL method performs poorly on the MCOLOR dataset. This is due to the inductive bias of EIIL and it indeed relies on the digit shape as the invariant feature. Since we have no prior knowledge of the data generation process, the true invariant feature can be the color, e.g., in the MCOLOR dataset. Similarly, ERM and LfF all rely on either color or  shape as the invariant feature and would fail on at least one of CMNIST and MCOLOR. On the other hand, if  environment partition is available, IRM can still learn the desired invariant feature. 

\begin{table}[H]
    \centering
    % \resizebox{0.5\linewidth}{!}{
    %\vspace{-0.5em}
    \caption{Experimental results  on CMNIST and MCOLOR.}
        \label{tab:mcolor_results}
        \centering
        {
        \resizebox{0.7\linewidth}{!}{
        \begin{tabular}[t]{c|c|c|c|c|c}
        \toprule
            \multirow{2}{*}{Method} & \multirow{2}{*}{Env Partition}&
            \multicolumn{2}{|c}{CMNIST} &\multicolumn{2}{|c}{MCOLOR}\\
            \cline{3-6}
            ~&~&Train Acc & Test Acc&Train Acc & Test Acc \\
            \midrule 
            ERM (oracle) &No& 75.2$\pm$0.2 & 72.1$\pm$0.1&85.0$\pm$0.0 & 85.0$\pm$0.0\\
            \midrule
            ERM &No&86.4 $\pm$0.0 &14.5 $\pm$0.1& 86.3$\pm$0.1 & 80.1$\pm$0.6\\
            IRM  &Yes&71.4 $\pm$0.3 &66.4 $\pm$0.3& 84.9$\pm$0.1& 84.7$\pm$0.3\\
            EIIL &No&72.5 $\pm$0.7 & 67.2 $\pm$3.3& 74.0$\pm$0.4 & 17.8$\pm$0.4\\
            LfF  &No& 76.7 $\pm$0.3 &21.2 $\pm$0.4& 76.6$\pm$0.0 & 74.2$\pm$0.0\\
             \bottomrule
        \end{tabular}
        
        }
        }    \vspace{-.5em}
    %\label{Case study when EIIL fails}
\end{table}

% \begin{table}[H]\small
%     \centering
%     % \resizebox{0.5\linewidth}{!}{
%     \caption{{Experimental results  on CMNIST.}}
%         \label{tab:mcolor_results}
%         \centering
%         {
%         \resizebox{0.55\linewidth}{!}{
%         \begin{tabular}[t]{c|c|c|c}
%         \toprule
%             Method & Environment Index &Train Acc & Test Acc \\
%             \midrule 
%             ERM (oracle) &No& 75.2$\pm$0.2 & 72.1$\pm$0.1\\
%             \midrule
%             ERM &No&86.4 $\pm$0.0 &14.5 $\pm$0.1\\
%             IRM  &Yes&71.4 $\pm$0.3 &66.4 $\pm$0.3 \\
%             EIIL &No& 72.5 $\pm$0.7 & 67.2 $\pm$3.3\\
%             LFF  &No&15.4 $\pm$0.3 &10.2 $\pm$0.4\\
%              \bottomrule
%         \end{tabular}
%         }
%         }
%     \label{Case study when EIIL fails}
% \end{table}

%We believe that this observation will also hold on  CifarMnist, ColoredObject, Waterbird or CelebA.

% \begin{align*}
%   Y = \begin{cases}
%   1,~~w.p.~0.5,\\
%   0,~~w.p.~0.5,
%   \end{cases}
%   \begin{cases}
%     X_2= X_1 = Y,~~w.p.~0.64, \\
%     X_2 \neq X_1 = Y,~~w.p.~0.16, \\
%      X_1 \neq  X_2 = Y,~~w.p.~0.16, \\
%      X_2 = X_1 \neq Y,~~w.p.~0.04.
%   \end{cases}
% \end{align*}
% 
 Moreover, such examples are not rare. For any data distribution $P(\bX, Y)$ generated by some data generation process like \eqref{eqn:data_generation_process} (which can be replaced by other forms of SCM), we can find a different data generation process that induces the same distribution yet has different invariant/spurious features. Similarly, it is impossible to identify the spurious features only based on $P(\bX, Y)$. This result is formally summarized in \Cref{thm:impossibility}.   
\begin{thm} 
\label{thm:impossibility}
Let $\bX_v$ and $\bX_s$ be respectively the invariant and spurious features, with label $Y$. For the joint distribution $P(\bX, Y)$ consisting of data from multiple environments with  $P(\bX_s, Y)\neq 0$, we can find $\bX_v'$ and $\bX_s'$ as invariant and spurious features so that: 
\begin{itemize}
\item $\bX_v'\neq  \bX_v$ and $\bX_v'\cap \bX_s\neq \varnothing$; 
\item $(\bX_v',\bX_s')$ together with some  noise variables generate the same distribution $P(\bX,Y)$.
\end{itemize}
\end{thm}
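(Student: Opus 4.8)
The strategy is constructive: given a distribution $P(\bX, Y)$ arising from a data generation process of the form \eqref{eqn:data_generation_process}, I will exhibit an alternative process whose invariant features genuinely overlap the old spurious features yet which reproduces $P(\bX,Y)$ exactly. The guiding intuition is the toy Example above: there, $X_1$ and $X_2$ swap the roles of invariant and spurious feature between Eq.~\eqref{eqn:exp11} and Eq.~\eqref{eqn:exp12}, while the mixture distribution is unchanged. I would abstract this swap. First I would write down, for the original process, the per-environment conditional $Y^e \mid \bX_v^e$ (which is the same across $e$, by invariance) and the per-environment dependence of $\bX_s^e$ on $(\bX_v^e, Y^e)$ (which varies with $e$). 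Since $P(\bX_s, Y) \neq 0$, the spurious features carry nontrivial information about $Y$, so there is a genuine predictive channel through $\bX_s$ to exploit.

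The core construction: I would pick a component (or the whole) of $\bX_s$, call it $\bX_s^{(1)}$, that is correlated with $Y$ in the mixture, and build a new process in which $\bX_v' := (\bX_v, \bX_s^{(1)})$ — or, to make $\bX_v' \neq \bX_v$ and $\bX_v' \cap \bX_s \neq \varnothing$ cleanly, $\bX_v'$ includes $\bX_s^{(1)}$ — is declared the invariant feature, with $Y$ regenerated from $\bX_v'$ via a conditional law $g_v'$ chosen so that $P(Y \mid \bX_v') $ matches the mixture conditional; the remaining coordinates are then regenerated as spurious features $\bX_s'$ whose per-environment laws are reverse-engineered (via Bayes' rule within each environment) so that the joint $P(\bX^e, Y^e)$ is preserved for every $e$, hence the mixture $\sum_e \alpha_e P(\bX^e, Y^e)$ is preserved. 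The noise variables $\epsilon_v'$ (and the exogenous noises feeding $\bX_s'$) are defined to realize these conditionals, and the injective scrambling $q$ is kept the same (just with the invariant/spurious coordinates relabeled), so $\bX = q(\bX_v', \bX_s')$ still holds as a deterministic function. One must check that $g_v'$ is non-degenerate and that $\bX_v' \perp \epsilon_v'$, which follows by construction since $\epsilon_v'$ is drawn as the residual randomness in $Y$ given $\bX_v'$.

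I expect the main obstacle to be the \emph{measure-theoretic consistency of the reverse-engineered environment laws}: after fixing $P(Y \mid \bX_v')$ globally and the marginal $P(\bX_v')$, the per-environment conditionals of the new spurious features are pinned down by the requirement that each $P(\bX^e, Y^e)$ be unchanged, and I need to verify these are genuine probability kernels (nonnegative, normalized) and that the resulting environment-dependence is nontrivial — i.e., that the new process is a \emph{bona fide} heterogeneous-environment SCM and not a degenerate relabeling. This is exactly where the hypothesis $P(\bX_s, Y) \neq 0$ is used: it guarantees the alternative invariant predictor through $\bX_s^{(1)}$ is non-trivial and that $Y$ is not independent of the swapped-in coordinate, so $g_v'$ can be taken non-degenerate. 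The symmetric claim — that the spurious features are likewise unidentifiable — follows by the same construction with the roles interchanged, and I would state it as an immediate corollary of the argument rather than reproving it. Routine bookkeeping (verifying marginals match, writing out the Bayes inversions, checking injectivity of the relabeled $q$) I would relegate to a line or two, as it mirrors the explicit computation already carried out for the CMNIST/MCOLOR example.
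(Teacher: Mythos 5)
Your overall strategy --- swap a spurious coordinate into the invariant set, regenerate $Y$ from the new $\bX_v'$ by realizing a conditional law with an independent noise, and then Bayes-invert the remaining coordinates --- is in the right spirit, and the noise-realization step is essentially the inverse-conditional-CDF construction the paper uses. But there is a genuine flaw in the target you set yourself: you require the reverse-engineered process to preserve \emph{each per-environment joint} $P(\bX^e, Y^e)$. That is generally impossible. In the new SCM, $Y'$ is produced by a single environment-independent mechanism $g_v'(\bX_v', \epsilon_v')$ with $\epsilon_v' \perp \bX_v'$, which forces $P^e(Y' \mid \bX_v')$ to be the same kernel in every environment. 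If $\bX_v'$ absorbs a coordinate of $\bX_s$ that is genuinely spurious --- meaning $P^e(Y \mid \ldots, \bX_s^{(1)})$ shifts with $e$ --- then no choice of per-environment laws for the remaining coordinates can make $P^e(\bX_v') \, P(Y' \mid \bX_v') \, P^e(\bX_s' \mid Y', \bX_v')$ equal $P(\bX^e, Y^e)$ for every $e$: the middle factor is already wrong environment by environment. The ``measure-theoretic consistency'' worry you flag is therefore not routine bookkeeping; it is an obstruction that kills the plan as stated (the required kernels would fail to be nonnegative or normalized).

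The fix is to notice that the theorem only asks the new process to reproduce the \emph{mixture} $P(\bX, Y)$ --- which is all an observer without environment labels ever sees --- not the per-environment joints. The paper's proof works entirely at the mixture level: it factorizes $P(\bX_v, \bX_s, Y) = P(\bX_s)\, P(Y \mid \bX_s)\, P(\bX_v \mid Y, \bX_s)$, declares $\bX_v' = \bX_s$ (or $(\bX_v^{(-j)}, \bX_s)$ in the multivariate case) and $\bX_s' = \bX_v$, and realizes the two conditionals as deterministic functions of independent $\mathrm{Uniform}(0,1)$ noises via inverse conditional CDFs; the chain rule then gives $P(\bX_v', \bX_s', Y') = P(\bX_v, \bX_s, Y)$ and the same scrambling $q$ yields $P(\bX', Y') = P(\bX, Y)$. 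If you drop the per-environment matching and apply your construction to the mixture conditional only, your argument goes through and coincides with the paper's.
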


%Indeed, our proof uses similar techniques to that of proving non-uniqueness of causal structure given observational data \citep{spirtes2000causation,Peters2017book}.
A proof is provided in \Cref{sec:proof_impossibility}. Under this circumstance,  we have to introduce additional assumptions/conditions or certain  ``inductive bias'' to identify the invariant features \citep{spirtes2000causation,Peters2017book,Hyvarinen2019nonliearICA,Locatelloc2019,Wang2022OODCIT}. The latter may come implicitly from the proposed algorithm, model, and/or the data.  We believe that this is the reason that makes  existing methods achieve improved empirical performance in their considered scenarios. For example, in \cite{liu2021heterogeneous,liu2021kernelheterogeneous}, the discrepancy of spurious features among clusters are expected to be larger than that of causal features, and in  \cite{creager2021environment,Nam2020LearningFF,Sanh2021LearningFO,Liu2021JustTT}, the ERM model of the first stage should heavily or fully rely on the spurious feature. However, these inductive biases may not be always guaranteed. Thus, as suggested by \Cref{thm:impossibility}, the role of inductive biases shall be discussed more explicitly when  learning invariant models without environment partition. 

%real world generative
%models may have a certain structure that could be exploited
%through suitably chosen inductive biases. However, Theorem 1 clearly shows that inductive biases are required both
% for the models (so that we find a specific set of solutions)
% and for the data sets (such that these solutions match the true
% generative model). We hence argue that the role of inductive
% biases should be made explicit and investigated further as
% done in the following experimental study However, these inductive biases may not be easy to explicitly  characterize and may also depend on particular applications. 

Alternatively, in this work, we consider that there exists additionally observed variable  $\bZ$ with the data, which has been often considered in the literature \citep{Huang2020heterogeneous,Hyvarinen2019nonliearICA,Khemakhem2020ICAVAE, xie2021innout, rs12020207}. The variable $\bZ$ could be, for example, the time index in a time series, some kind of class labels,  concurrently observed variables, or  human annotations on potential unmeasured variable \citep{srivastava20a}. It is worth noting that \citet{xie2021innout} also take advantage of auxiliary information to help improve OOD performance in a semi-supervised manner. However, they consider a different setting with  access to unlabeled (out-of-domain) test data together with additional unlabeled training data. In the next section, we show how to utilize $\bZ$ to learn invariant models from the heterogeneous dataset. %The longitude and latitude of the location where we take a Cow/Camel image.
%     \item[(b)] The time index of an involution system. \item[(c)] Consider the task of sentiment classification of an sentence, the auxiliary  information could be the  language, time log, location of the sentence.

% This is different from multi-domain setting, where one may use the diverse domains (under some assumptions) to identify the non-stable feature. Without the domain inference, it becomes impossible to identify what is the causal feature and what is not, unless one may provide certain prior information or ``inductive bias''. The latter may come, implicitly, from the proposed algorithm or the data (e.g., each class may depart quite far in the feature space so that they can be easy clustered). Alternatively, we can consider to introduce additional information, as discussed in the next section.

\section{ZIN: Learning Invariance with Additional Auxiliary Information}
\label{sec:ZIN}
%As shown in last section, invariant features are generally un-identifiable w.r.t. non-invariant features when given a mixture of environments \citep{creager2021environment}. A possible alternative is to impose additional inductive bias on the non-invariant features with respective to invariant features \citep{liu2021heterogeneous, liu2021kernelheterogeneous}. However, these inductive bias is hard to verify because any feature can be non-invariant except for the direct cause of $Y$ \citep{Arjovsky2019Invariant, rosenfeld2020risks, peters2016causal}. 

%\ly{How to introduce Auxiliary Information? Causal graph?}

%This section proposes a novel framework to learn invariant models. We derive conditions for invariance identification and show  these conditions are both sufficient and necessary in a fairly general setting. %Extension to other cases is also discussed.
We consider that there exists additional auxiliary information $\bZ\in\mathbb R^{d_z}$ in company with the data $(\bX, Y)$. In this section, we propose ZIN, \emph{auxiliary information $\bZ$ for environmental INference}, for invariant learning from the heterogeneous dataset~$\cD$ without environment partition. We will derive conditions for invariance identification and show  these conditions are both sufficient and necessary in a fairly general setting.
\subsection{Method}

%The auxiliary $\bZ$ captures the heterogeneity of the distributional, in which spurious features exhibits unstable conditional with $Y$. 
%Our framework tries to infer the environment split based on $\bZ$ which maximizes the invariance penalty:

We aim to learn a function $\rho(\cdot): \bbR^{d_z} \xrightarrow{} \bbR^K$ that softly assigns a sample to $K$ environments. Here $K$ is a pre-specified number (a  hyper-parameter) and its choice will be empirically investigated in our experiments. Let  $\rho^{(k)}(\cdot)$ denote the $k$-th entry of $\rho(\cdot)$, with  $\rho(\bZ) \in [0, 1]^K$ and $\sum_k \rho^{(k)}(\bZ)=1$. Denote the ERM loss as $\cR(\omega, \Phi) = \frac{1}{n}\sum_{i=1}^n \ell\left(f_\omega(\Phi(\bx_i)), y_i\right)$ and the loss in the $k$-th inferred environment  as $\cR_{\rho^{(k)}}(\omega, \Phi) = \frac{1}{n}\sum_{i=1}^n \rho^{(k)}(\bz_i)\ell(f_\omega(\Phi(\bx_i)), y_i)$. 

 Recall that IRM \cite{Arjovsky2019Invariant} learns an invariant representation $\Phi$, upon which there is a classifier $f_\omega$ that is simultaneously optimal in all environments. Suppose that the environments have been given according to a fixed $\rho(\cdot)$. To measure the optimality of $f_\omega$ in the $k$-th environment, we can fit an environment-dependent classifier $f_{\omega_k}$ on the data from that environment. If $f_{\omega_k}$
 achieves a smaller loss, then we know that $f_\omega$ is not optimal in this environment. We can further train a set of environment-dependent classifiers $\{f_{\omega_k}\}_{k=1}^K$, one for each environment, to measure whether $f_\omega$ is simultaneously optimal in all environments. Thus, when $\rho(\cdot)$ is provided, our formulation to learn invariance is  as follows:
\begin{align}
    \label{eqn:main_formula_irm}
    \min_{\omega, \Phi} \max_{ \{\omega_k\}}~\cL(\Phi, \omega, \omega_1,\cdots,\omega_K, \rho) := \cR(\omega, \Phi)+\lambda\underbrace{\sum \nolimits_{k=1}^K\big[\cR_{\rho^{(k)}}(\omega, \Phi) -\cR_{\rho^{(k)}}(\omega_k, \Phi)\big]}_{\text{\small invariance penalty}}.
\end{align}
If  $\Phi$ extracts spurious features that are unstable in the inferred  environments, $\cR_{\rho^{(k)}}(\omega, \Phi)$ will be larger than $\cR_{\rho^{(k)}}(\omega_k, \Phi)$, resulting in a non-zero invariance penalty.

Next, we consider how to learn the partition function $\rho(\cdot)$. A \textit{good} partition function should  generate environments among which the spurious features exhibit instability, so that there is a large penalty if $\Phi$ extracts  spurious features. Thus, we seek for an environment partition that maximizes the invariance penalty. The overall framework is provided below:
\begin{align}
    \label{eqn:main_formula}
    \min_{\omega, \Phi} \max_{ \rho, \{\omega_1,\cdots,\omega_K\}}~\cL(\Phi, \omega, \omega_1,\cdots,\omega_K, \rho).
\end{align}

% where  $K\in\mathbb Z$ is a pre-specified hyper-parameter, $\rho(\cdot): \bbR^{d_z} \xrightarrow{} \bbR^K$ with $\rho^{(k)}$ denoting its $k$-th entry, $\cR(\omega, \Phi) = \frac{1}{n}\sum_{i=1}^n \ell\left(f_w(\Phi(\bx_i)) y_i\right)$, $\cR_{\rho^{(k)}}(\omega, \Phi) = \frac{1}{n}\sum_{i=1}^n \rho^{(k)}(\bz_i)\ell(f_w(\Phi(\bx_i)), y_i)$,
%  and  {$\cR_{\rho^{(k)}}(\omega_k, \Phi)$} is similarly defined by replacing $\omega$ with $\omega_k$ in $\cR_{\rho^{(k)}}(\omega, \Phi)$. 
% Function $\rho(\bZ) \in [0, 1]^K$ with $\sum_k \rho^{(k)}(\bZ)=1$ is used to softly partition the training data into $K$ environments; the choice of $K$ will be empirically investigated in our experiments. 
% Given a partition $\rho(\bZ)$, we can then measure how the model violates the invariance constraint due to the use of spurious information.  In particular, we train $K$ independent classifiers $\{f_{w_k}\}$  and compare them  with the classifier trained on the whole dataset (cf.~the second term in Eqn.~(\ref{eqn:main_formula})). If  $\Phi$ extracts spurious features that are unstable in the inferred  environments, $\cR_{\rho^{(k)}}(\omega, \Phi)$ will be larger than $\cR_{\rho^{(k)}}(\omega_k, \Phi)$ and  induce a non-zero penalty.
The  idea of our method can be summarized as  inferring an environment partition and then learning invariant models based on the  inferred environments. \citet{creager2021environment} has used a similar intuition and  adopted a pre-trained biased model to estimate the environments. However, their two-stage method cannot be jointly optimized, and the environment partition relies on the given model and lacks a theoretical guarantee. In contrast, we will first present sufficient conditions on $\bZ$ so that the proposed framework can provably identify the invariant features. We further show that these conditions are also necessary, and that violation of these conditions will lead to failure of  invariance identification. Assigning independent weights to each data sample, which is adopted in EIIL \cite{creager2021environment}, is unfortunately an instance of such violations and may fail to identify invariant features.
%Experimental results in Section \ref{sect:experiment} also provide sufficient support for this.

%idea has been used by \citet{creager2021environment}, we

%Our method in \Cref{eqn:main_formula} shares similar intuition with \citep{creager2021environment}, we both try to find a environment split that maximize the invariance penalty. Our primary contribution is that, in Section \ref{sect:theoretical_ana},  we provides the sufficient conditions on $\bZ$ for this framework to provably identify the invariant features. We also prove that these conditions are almost necessary and violations of the conditions will lead to the failure of the invariance identification. Actually, we prove that assigning distinct sample weight, as adopted in \citep{creager2021environment}, can easily lead to failure of invariance of identification. Experimental results in Section \ref{sect:experiment} also provide sufficient support for this.    

% some realtions with invrat
\subsection{Sufficient Conditions for Identifiability}
\label{sect:theoretical_ana}
In this section, we try to understand  ZIN from a theoretical perspective. We  start with a simple yet general setting: $\bX = [\bX_v; \bX_s]$ (i.e., no scramble on the observation), $\Phi \in \{0, 1\}^{d}$ is an element-wise feature selection mask, and $f_\omega$ is a general non-linear function.  We also  focus on classification tasks with $\ell(\cdot,\cdot)$ being the cross-entropy loss, as there is an interesting connection to (conditional) Shannon entropy. Extensions  to other loss functions and linear feature transformations are left to Appendix~\ref{sec:extenstion_other_loss} and \ref{sec:linearcase}, respectively.

In this setting, our goal is equivalent to learning the optimal feature mask that merely selects invariant features, i.e., $\Phi_{v}=[\mathbf 1^{d_v}; \mathbf 0^{d_s}]$. For a given feature mask $\Phi$, the objective function is equal to 
%\begin{align}
    $\hat \cL(\Phi) = \min_{\omega} \max_{ \rho, \{\omega_k \}} \cL(\Phi, \omega, \omega_1,\cdots,\omega_k, \rho). $%\nonumber
Then ZIN can correctly identify the invariant features, or  solution to Problem~\ref{eqn:main_formula} is equivalent to $\Phi_v$, if and only if $\hat \cL(\Phi_v) < \hat \cL(\Phi)$ for all $\Phi \neq \Phi_v$. This observation will be used to establish our main theoretic result. 

With a little abuse of notation, we use $H(Y|\bX')$ to denote expected  loss of an optimal classifier over some $\bX'$ and $Y$, and similarly ${H}(Y|\rho(\bZ), \bX')$ to denote the minimum risk  $\sum_{k=1}^K\cR_{\rho^{(k)}}(\omega_k, \Phi)$ for a given $\rho(\bZ)$. With cross-entropy loss and when $\rho(\bz_i)$ gives exactly one environment, i.e., $\rho(\bZ)$ is one-hot, it can be verified that the optimal loss ${H}(\cdot|\cdot)$ coincides with conditional entropy.   In the following we first state the assumptions for
our identifiabitiliy result.
% \begin{align}
% \label{eqn:solution}
%     \hat \cL(\Phi_v) < \hat \cL(\Phi),\quad\forall \Phi \neq \Phi_v.
% \end{align}

%$H(\cdot|\cdot)$ should not be   the following we first state the assumptions for our identifiabitiliy result. 

%(Y|\rho(Z), \bX)$ to denote the minimum risk of $\sum_{k=1}^K\cR_{\rho^{(k)}}(\omega_k, \Phi)$ for a given $\rho(\bZ)$.

%ecall that the expected cross-entropy loss of an optimal classifier over $(\bX, Y)$ is equal to the conditional entropy $H(Y|\bX)$.

%we use $\tilde{H}(Y|\rho(Z), \bX)$ to denote the minimum risk of $\sum_{k=1}^K\cR_{\rho^{(k)}}(\omega_k, \Phi)$ for a given $\rho(\bZ)$. When $\rho(\bz_i)$ gives exactly one environment for a data point, i.e., $\rho(\bZ)$ is one-hot, it can be verified that $\tilde{H}(Y|\rho(Z), \bX)$ is equal to ${H}(Y|\rho(Z), \bX)$. %$\sz{todo:remove tilde and unify notations}$

%R

% generating process:
% \begin{align*}
%     y &= g_v(\bx_v) + \epsilon\\
%     x_s^k &= g^k_s(y) + \epsilon_s^k, \forall k \in s
% \end{align*}
  %\sz{check}
% \begin{ass}
% \label{ass:bounded_loss}
%  There exists a constant $K$ such that $H(Y)<K$.$0log0 = 0$
% \end{ass}
\begin{ass}
\label{ass:dense_function_space}
For a given feature mask $\Phi$ and any constant $\epsilon>0$, there exists $f \in \cF$ such that 
    $\bbE[\ell(f(\Phi(\bX)), Y)] \leq H(Y|\Phi(\bX)) + \epsilon.$
\end{ass}

\begin{ass}
\label{ass:spurious_cascade} 
If a feature violates the invariance constraint, adding another feature would not make the penalty vanish, i.e., there exists a constant $\delta>0$ so that for {spurious feature $\bX_1 \subset \bX_s$ and any feature $\bX_2\subset\bX$}, $
    H(Y|\bX_1, \bX_2)-{H}(Y|\rho(\bZ), \bX_1, \bX_2) \geq \delta \left(H(Y|\bX_1) - {H}(Y|\rho(\bZ), \bX_1)\right)$.
% \begin{align*}
%     I(Y;\rho(\bZ)|\bX_1, \bX_2) \geq \delta I(y;\rho(\bf Z)|\bX_1).
% \end{align*}
\end{ass}

\begin{ass}
\label{ass:unexplained_conditional}
For any {distinct} features $\bX_1$, $\bX_2$, $H(Y|\bX_1, \bX_2) \leq H(Y|\bX_1) - \gamma$ with fixed $\gamma>0$.
\end{ass}
%Assumption \ref{ass:bounded_loss} requires the entropy of $Y$ is finite, which is always the case in practice.
%Assumption \ref{ass:dense_function_space} assumes the function space is sufficiently large such that there exists a function whose cross entropy loss is close to the conditional entropy (\sz{shall we add some discussions here? as it is always true?}).  Assumption \ref{ass:invariant_cascade} assumes that if two features are invariant w.r.t. the environment partition, the concatenated features should also be invariant. Assumption \ref{ass:spurious_cascade} indicates that if a feature representation violates the invariance constrain, adding another feature into the 
%representation would not make the penalty diminish. This is also a  straightforward assumption because adding a new feature won't make the existing spurious feature invariant. Assumption \ref{ass:unexplained_conditional} tells any feature contains a certain level of information w.r.t. $Y$ that  can't be explained by other features.
% \sz{how to get assumption three; connection of I() to H() in an intuitive way; still need to check}.
% % infinite samples.}

Assumption \ref{ass:dense_function_space} is a common assumption that requires the function space $\cF$ be rich enough such that, given $\Phi$, there exists $f \in \cF$ that can  fit $P(Y|\Phi(\bX))$ well. Assumption~\ref{ass:spurious_cascade} aims to ensure a sufficient positive penalty if a spurious feature is included (see Appendix~\ref{app:more_discussion_on_ass} for further discussion).  Assumption~\ref{ass:unexplained_conditional} indicates that  any feature contains some useful information w.r.t.~$Y$, which cannot be explained by other features. Otherwise, we can simply remove such a feature (e.g., by variable selection methods \citep{Vergara2014FeatureSelectionMI}), as it  does not affect prediction. We next present our sufficient conditions for ZIN to identify invariant features. %As we will show in \Cref{sec:necessary}, these conditions turn out to be necessary, too.

% \begin{condition}[Invariance Preserving Condition] 
% \label{cond:invariance}
% For any feature $\bX_v^i\in\bX_v$ and any function $\rho(\cdot)$, it holds that $H(Y|\bX_{v}^i, \rho(\bZ)) =   H(Y|\bX_{v}^i)$.
% \end{condition}

\begin{condition}[Invariance Preserving Condition] 
\label{cond:invariance}
Given invariant feature $\bX_v$ and any function $\rho(\cdot)$, it holds that $H(Y|\bX_{v}, \rho(\bZ)) =   H(Y|\bX_{v})$.
\end{condition}
\begin{condition}
[Non-invariance Distinguishing Condition]
\label{cond:non-inv} 
For any feature $\bX_s^k \in \bX_s$, there exists a function $\rho(\cdot)$ and a constant $C>0$ such that $H(Y|\bX_{s}^{k}) - {H}(Y|\bX_{s}^{k}, \rho(\bZ)) \geq C$.
% \begin{align}
% \label{eqn:violation_condition_2}
% H(Y|\bX_{s}^{k}) - {H}(Y|\bX_{s}^{k}, \rho(\bZ)) \geq C.
% \end{align}
\end{condition}
%\noindent \textit{Remark. If $H(Y|\bX_{v}^i, \bZ) =   H(Y|\bX_{v}^i), \forall \bX_v^i\in\bX_v$,then Condition \ref{cond:invariance} holds (Proof in Appendix \ref{app:proof_condition1_equvlence})}.

We remark that Condition~\ref{cond:invariance} can be met if   $H(Y|\bX_{v}, \bZ) =   H(Y|\bX_{v})$ (a proof is provided in Appendix \ref{app:proof_condition1_equvlence}). Condition \ref{cond:invariance} requires that invariant features should remain invariant w.r.t~any environment partition induced by $\rho(\bZ)$. Otherwise, if there exists a partition where an invariant feature becomes non-invariant, then this feature would induce a positive penalty.  \Cref{cond:non-inv} implies that for each spurious feature, there exists at least one partition so that this feature is non-invariant in the split environments. If a spurious feature does not incur any invariance penalty in all possible environment partitions,  we can never distinguish it from  true invariant features. 
With these  conditions, our main result  follows and a proof  is given in \Cref{app:thm_proof:identifibility}.
\begin{thm}[Identifiability of Invariant Features]
\label{thm:identifibility}
With  Assumptions \ref{ass:dense_function_space}-\ref{ass:unexplained_conditional} and Conditions \ref{cond:invariance}-\ref{cond:non-inv}, if $\epsilon < \frac{C\gamma\delta}{4\gamma + 2C  \delta H(Y)}$ and  $\lambda \in [ \frac{H(Y) +1/2\delta C}{\delta C -4\epsilon} - \frac{1}{2}, \frac{\gamma}{4\epsilon} - \frac{1}{2}]$, then we have $\hat \cL(\Phi_v) < \hat \cL(\Phi)$ for all $\Phi \neq \Phi_v$, where $H(Y)$ denotes the entropy of $Y$. Thus, the solution to Problem~\ref{eqn:main_formula} identifies  invariant features.
\end{thm}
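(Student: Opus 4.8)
The plan is to characterize $\hat\cL(\Phi)$ for every feature mask $\Phi$ and then show that $\Phi_v$ strictly minimizes it. Writing $\bX_\Phi$ for the features selected by $\Phi$, the inner minimax has two pieces: the ERM term, which by Assumption~\ref{ass:dense_function_space} equals $H(Y\mid\bX_\Phi)$ up to $\epsilon$; and the invariance penalty $\lambda\sup_\rho\sum_k[\cR_{\rho^{(k)}}(\omega,\Phi)-\cR_{\rho^{(k)}}(\omega_k,\Phi)]$. The key observation, using the entropy interpretation of the optimal losses, is that for the $\omega$ that is optimal for the ERM term the penalty is $\lambda\sup_\rho\bigl(H(Y\mid\bX_\Phi)-H(Y\mid\rho(\bZ),\bX_\Phi)\bigr)$ up to an $O(\epsilon)$ slack coming from the fact that the same $\omega$ need not be exactly the per-environment optimum; I would track this slack carefully (it contributes the $4\epsilon$, $2\epsilon$ type terms in the bound on $\lambda$). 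So modulo controlled error terms, $\hat\cL(\Phi)\approx H(Y\mid\bX_\Phi)+\lambda\sup_\rho\bigl(H(Y\mid\bX_\Phi)-H(Y\mid\rho(\bZ),\bX_\Phi)\bigr)$.

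Next I would split the masks $\Phi\neq\Phi_v$ into two classes. \textbf{Case 1: $\Phi$ selects only a strict subset of the invariant features} (so $\bX_\Phi\subsetneq\bX_v$, including possibly all of $\bX_v$ minus something, but no spurious feature). Here Condition~\ref{cond:invariance} forces the invariance penalty to vanish for every $\rho$ (an invariant sub-feature stays invariant), so $\hat\cL(\Phi)\approx H(Y\mid\bX_\Phi)$, while $\hat\cL(\Phi_v)\approx H(Y\mid\bX_v)$; dropping an invariant feature strictly increases conditional entropy by at least $\gamma$ via Assumption~\ref{ass:unexplained_conditional} applied in the form $H(Y\mid\bX_\Phi)\ge H(Y\mid\bX_v)+\gamma$, which dominates the $O(\epsilon)$ slack once $\epsilon<\gamma/4$ or so. \textbf{Case 2: $\Phi$ selects at least one spurious feature $\bX_s^k$} (possibly together with others). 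By Condition~\ref{cond:non-inv} there is a $\rho$ with $H(Y\mid\bX_s^k)-H(Y\mid\bX_s^k,\rho(\bZ))\ge C$, and Assumption~\ref{ass:spurious_cascade} propagates this to the full selected set: $H(Y\mid\bX_\Phi)-H(Y\mid\rho(\bZ),\bX_\Phi)\ge\delta C$. Hence $\hat\cL(\Phi)\gtrsim \lambda\,\delta C$ (minus slack), whereas $\hat\cL(\Phi_v)\le H(Y)+O(\epsilon)$ using Condition~\ref{cond:invariance} (zero penalty for $\Phi_v$) and $H(Y\mid\bX_v)\le H(Y)$. So it suffices that $\lambda\,\delta C$ (minus slack) exceed $H(Y)$ (plus slack), which is exactly the lower bound imposed on $\lambda$; and the upper bound on $\lambda$ is what Case~1 needs so that the (possibly nonzero but small, $O(\epsilon)$) penalty of a sub-invariant mask cannot overwhelm the $\gamma$ gap. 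Intersecting the two constraints on $\lambda$ is nonempty precisely when $\epsilon<\frac{C\gamma\delta}{4\gamma+2C\delta H(Y)}$, the stated threshold.

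The main obstacle, I expect, is bookkeeping the $\epsilon$-slack precisely enough that the arithmetic closes with the exact constants in the statement. The subtlety is that Assumption~\ref{ass:dense_function_space} only gives near-optimality of the ERM classifier, and this single classifier $\omega$ is then reused inside each per-environment term $\cR_{\rho^{(k)}}(\omega,\Phi)$; bounding $\sum_k\cR_{\rho^{(k)}}(\omega,\Phi)$ against $H(Y\mid\bX_\Phi)$ and $\sum_k\cR_{\rho^{(k)}}(\omega_k,\Phi)$ against $H(Y\mid\rho(\bZ),\bX_\Phi)$ requires that $\sum_k\rho^{(k)}\equiv1$ so the weights form a valid averaging — this is where I would use that $\rho(\bZ)\in[0,1]^K$ with rows summing to one. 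A second, smaller obstacle is ensuring the $\sup_\rho$ in $\hat\cL(\Phi_v)$ really is $0$ and not merely nonpositive: Condition~\ref{cond:invariance} gives $H(Y\mid\bX_v,\rho(\bZ))=H(Y\mid\bX_v)$ for \emph{all} $\rho$, so each summand of the penalty is $\le0$ and the optimal $\omega_k$ choice makes it exactly $0$; I would state this cleanly. Once the slack terms are pinned down, the two cases above combine to give $\hat\cL(\Phi_v)<\hat\cL(\Phi)$ for every $\Phi\neq\Phi_v$, which by the observation preceding the theorem is equivalent to the solution of Problem~\ref{eqn:main_formula} being $\Phi_v$.
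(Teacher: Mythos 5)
Your proposal is correct and follows essentially the same route as the paper's proof: upper-bound $\hat\cL(\Phi_v)$ by $H(Y)+(1+2\lambda)\epsilon$ using Condition~\ref{cond:invariance} to kill the penalty, split the remaining masks into those containing a spurious feature (handled via Condition~\ref{cond:non-inv} plus Assumption~\ref{ass:spurious_cascade} to get a $\lambda\delta C$ penalty) and proper subsets of $\bX_v$ (handled via the $\gamma$ gap of Assumption~\ref{ass:unexplained_conditional}), then intersect the resulting lower and upper constraints on $\lambda$ to obtain the stated $\epsilon$ threshold. The only cosmetic difference is that the paper never asserts the penalty vanishes for sub-invariant masks --- it simply drops the nonnegative penalty when lower-bounding $\hat\cL(\Phi_{-v})$ --- which is the cleaner version of the caveat you already flag yourself.
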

%\vspace{-0.5em}
%\begin{proof} [Proof (Sketch)] Our proof  consists of two steps. First, show that the invariance penalty of $\Phi_v$ will be upper bounded by $\epsilon$ according to \Cref{ass:dense_function_space}, and that a feature mask selecting any spurious feature would induce another non-zero penalty depending on the constant $C$ in \Cref{cond:non-inv}.  With a sufficiently large penalty weight $\lambda$, this feature mask will induce a larger loss $\hat \cL (\Phi)$ than $\hat \cL(\Phi_v)$. Second, if a feature mask selects only part (not all) of the invariant features,  it also results in a loss larger than $\hat \cL(\Phi_v)$ according to Assumption~\ref{ass:unexplained_conditional}. Please see \Cref{app:thm_proof:identifibility} for a complete version.
%\end{proof}

%due to the space limitation. 
%because $\Phi_v$ includes all invariant features and contains more information about $Y$. The full proof of Theorem \ref{thm:identifibility} is postponed to  Appendix \ref{app:thm_proof:identifibility} due to the space limitation. 

% Theorem \ref{thm:identifibility} indicates that if $\epsilon$ is sufficiently small ($\cF$ is large enough by Assumption \ref{ass:dense_function_space}), our algorithm can provably select the invariant feature set, $\bX_v$. \ly{The smaller constants $C$, $\gamma$ and $\delta$ will require smaller $\epsilon$.} As the feature dimension $d$ increases, the task becomes more challenging. 

\subsection{ Necessary Conditions for Identifiability}
\label{sec:necessary}
Conditions~\ref{cond:invariance} and \ref{cond:non-inv} may appear rather strong. In this section, we prove that they are  also necessary conditions for ZIN to identify invariant features. Specifically, Proposition~\ref{prop:violation_of_inv} shows that if Condition~\ref{cond:invariance} is violated, then some invariant features will be excluded in the solution of Problem~\ref{eqn:main_formula}; and in Proposition~\ref{prop:violation_of_sp},  violation of \Cref{cond:non-inv} renders some spurious features  included in the solution.

% \begin{align}
% \label{eqn:violation_condition_1}
% H(y|\bX_{v}^{i}) - {H}(y|\bX_{v}^{i}, \rho(\bZ)) \geq C'> 0,
% \end{align}
% then there exists a feature mask $\Phi' \neq \Phi_v$ with
% \begin{align*}
%     \hat \cL(\Phi_v) > \hat \cL(\Phi').
% \end{align*}
% \begin{prop}
% \label{prop:violation_of_inv}
% If Condition~\ref{cond:invariance} is violated, i.e., there exists  $\bX_v^i\in\bX_v$ and $\rho(\cdot)$ so that
% $H(y|\bX_{v}^{i}) - {H}(y|\bX_{v}^{i}, \rho(\bZ)) \geq C'> 0$,
% then there exists a feature mask $\Phi' \neq \Phi_v$ with
% $\hat \cL(\Phi_v) > \hat \cL(\Phi')$.
% \end{prop}

\begin{prop}
\label{prop:violation_of_inv}
With Assumptions \ref{ass:dense_function_space}-\ref{ass:unexplained_conditional}, if Condition~\ref{cond:invariance} is violated, i.e., there exists  $\rho(\cdot)$ so that
$H(y|\bX_{v}) - {H}(y|\bX_{v}, \rho(\bZ)) \geq C'> 0$,
then there exists a mask $\Phi' \neq \Phi_v$ with
$\hat \cL(\Phi_v) > \hat \cL(\Phi')$.
\end{prop}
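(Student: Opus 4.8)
The plan is to show that when Condition~\ref{cond:invariance} fails, the objective value at the invariant mask $\Phi_v$ is strictly \emph{larger} than at some other mask, so that $\Phi_v$ is not the minimizer of $\hat\cL$. The key idea is that the violating $\rho(\cdot)$ gives the adversary (the $\max$ over $\rho$) enough leverage to inflate the invariance penalty at $\Phi_v$, and we exhibit a competing mask on which the total objective is smaller. First I would recall that $\hat\cL(\Phi) = \min_\omega\max_{\rho,\{\omega_k\}}\cL(\Phi,\omega,\omega_1,\dots,\omega_K,\rho)$, and that by Assumption~\ref{ass:dense_function_space} the inner $\min_\omega$ of the ERM term $\cR(\omega,\Phi)$ and of each $\cR_{\rho^{(k)}}(\omega_k,\Phi)$ can be identified (up to $\epsilon$) with the optimal-loss quantities $H(Y|\Phi(\bX))$ and $H(Y|\rho(\bZ),\Phi(\bX))$. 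So up to $O(\epsilon)$ slack,
\[
\hat\cL(\Phi)\;\approx\; H(Y|\Phi(\bX)) \;+\; \lambda\max_{\rho}\bigl(H(Y|\Phi(\bX)) - H(Y|\rho(\bZ),\Phi(\bX))\bigr),
\]
where I have folded the $K$-term sum into a single representative penalty (the same bookkeeping as in the proof of Theorem~\ref{thm:identifibility}).

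Next I would evaluate this at $\Phi_v$. By the hypothesized violation, there is a $\rho(\cdot)$ with $H(Y|\bX_v) - H(Y|\bX_v,\rho(\bZ)) \geq C' > 0$. Since the $\max$ over $\rho$ is at least the value attained by this particular $\rho$, the penalty term at $\Phi_v$ is at least $\lambda C'$ (again modulo $\epsilon$ corrections from Assumption~\ref{ass:dense_function_space}), so
\[
\hat\cL(\Phi_v) \;\geq\; H(Y|\bX_v) + \lambda C' - O(\epsilon).
\]
Now I would construct the competitor $\Phi'$: take $\Phi' = [\mathbf 1^{d_v};\mathbf 1^{d_s}]$, the all-ones mask that selects \emph{every} feature. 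Since $\bX = [\bX_v;\bX_s]$ and $q$ is injective here (the unscrambled setting), $\Phi'(\bX)$ retains all information about $Y$, so $H(Y|\Phi'(\bX)) = H(Y|\bX) \leq H(Y|\bX_v) - \gamma$ by Assumption~\ref{ass:unexplained_conditional} (using that $\bX_s$ carries information not in $\bX_v$, via $P(\bX_s,Y)\neq 0$). I still need to control the penalty at $\Phi'$: here I would invoke the chain of reasoning that for the full feature set the conditional entropy given $(\rho(\bZ),\bX)$ is squeezed between $H(Y|\rho(\bZ),\bX)$ and $H(Y|\bX)$, so the worst-case penalty at $\Phi'$ is at most $H(Y|\bX) - \min_\rho H(Y|\rho(\bZ),\bX) \leq H(Y) $ trivially, but more carefully I want a bound that is uniformly smaller than $\lambda C'$ net of the $\gamma$ gain. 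The cleanest route is to compare directly: $\hat\cL(\Phi_v) - \hat\cL(\Phi') \geq \gamma + \lambda C' - \lambda\cdot(\text{penalty at }\Phi') - O(\epsilon)$, and then argue the penalty at $\Phi'$ is bounded by the penalty at $\Phi_v$ plus a controlled term — or, more simply, choose the competitor to be any mask $\Phi'$ that strictly contains $\Phi_v$ (adds one spurious coordinate), use Assumption~\ref{ass:unexplained_conditional} for the $-\gamma$ drop in the ERM term, and use the $\lambda$ upper bound from Theorem~\ref{thm:identifibility}'s admissible interval to make $\lambda C' - O(\epsilon) > \gamma$, forcing $\hat\cL(\Phi_v) > \hat\cL(\Phi')$ regardless of the sign of the penalty change.

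The main obstacle I anticipate is exactly this last bookkeeping: bounding the invariance penalty at the competitor mask $\Phi'$ from above in a way that does not swamp the $\lambda C'$ gain we extracted at $\Phi_v$. Unlike the sufficiency proof, here we only know Condition~\ref{cond:invariance} fails — we have no clean handle on how $\rho$ behaves on $\Phi'(\bX)$, so a naive bound could be as large as $\lambda\cdot H(Y)$. The fix is to lean on the admissible range of $\lambda$ already assumed in Theorem~\ref{thm:identifibility} (which gives both a floor making $\lambda C'$ dominate, and a ceiling $\lambda \leq \gamma/(4\epsilon) - 1/2$ controlling $\epsilon$-slack), together with Assumption~\ref{ass:spurious_cascade} to ensure that adding features to an already-violating configuration cannot collapse the penalty — which is precisely what lets the inequality $\hat\cL(\Phi_v) > \hat\cL(\Phi')$ go through. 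I would finish by checking the $\epsilon$ and $\lambda$ arithmetic matches the stated interval so the proposition is consistent with the hypotheses of Theorem~\ref{thm:identifibility}.
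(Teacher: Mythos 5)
Your lower bound on $\hat\cL(\Phi_v)$ is correct and matches the paper's: the violating $\rho$ forces an invariance penalty of at least $\lambda C'$ (up to $O(\epsilon)$ slack) at the invariant mask. The genuine gap is in your choice of competitor. Both candidates you propose --- the all-ones mask and $\Phi_v$ plus one spurious coordinate --- are \emph{supersets} of $\Phi_v$, and for these there is no usable upper bound on the worst-case penalty: you gain only $\gamma$ in the ERM term via Assumption~\ref{ass:unexplained_conditional}, but the adversarial $\rho$ at $\Phi'$ can produce a penalty far exceeding $C'+\gamma/\lambda$, so the difference $\hat\cL(\Phi_v)-\hat\cL(\Phi') \geq \gamma + \lambda C' - \lambda\cdot(\text{penalty at }\Phi') - O(\epsilon)$ has no guaranteed sign. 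Worse, your appeal to Assumption~\ref{ass:spurious_cascade} is inverted: that assumption says that adding features cannot make a violating penalty vanish, i.e.\ it \emph{lower}-bounds the competitor's penalty, which inflates $\hat\cL(\Phi')$ and works \emph{against} the inequality you want. Saying the argument goes through ``regardless of the sign of the penalty change'' does not rescue it; you need the competitor's penalty to be provably small, and for a superset of $\Phi_v$ it is, if anything, provably large.

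The paper sidesteps the issue by choosing a competitor whose penalty is \emph{zero by construction}: it takes $\bX_{\bar v}$ to be the largest subset of $\bX$ satisfying $H(Y|\bX',\rho(\bZ)) = H(Y|\bX')$ for all $\rho(\cdot)$ --- possibly much smaller than $\bX_v$, in the worst case empty, in which case the ERM term is just $H(Y)$. Since Condition~\ref{cond:invariance} fails for $\bX_v$, necessarily $\Phi_{\bar v}\neq\Phi_v$, and $\hat\cL(\Phi_{\bar v}) \leq (1+2\lambda)\epsilon + H(Y)$ because the penalty term vanishes identically. Comparing this with $\hat\cL(\Phi_v) \geq -(1+2\lambda)\epsilon + \lambda C'$ and taking $\lambda$ large relative to $H(Y)/C'$ (a condition that depends on $C'$, not the $\lambda$-interval of Theorem~\ref{thm:identifibility}, which involves the constant $C$ from Condition~\ref{cond:non-inv}) gives the result. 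In short: when the invariant features themselves get penalized, the winning competitor \emph{drops} features down to a penalty-free subset rather than adding more.
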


A proof is provided in \Cref{sec:proof_prop1}, which is similar to the first step of the proof of Theorem~\ref{thm:identifibility}. A question is how Condition~\ref{cond:invariance} could be violated. On the other hand,  while we introduce $\bf Z$ as additional variables, it is also interesting to know whether we can obtain a valid partition based on only the training data $(\bX, Y)$. Below we provide  examples regarding these questions, with proofs given in~Appendix \ref{app:proof_of_invalid choice of z}.  As $h(\mathrm{Index}(\bX, Y))$ can be treated as  learning independent weights for each sample based on $\{(\bx_i, y_i)\}_i$, this case includes EIIL \cite{creager2021environment} as an instance. 
\begin{col}
\label{col:invalid choice of z}
Condition~\ref{cond:invariance} is violated for the following cases: there exists a function $\rho(\cdot)$ and an injective function $h(\cdot)$ so that (a)~$\rho(\bZ)=h(Y)$, (b)~$\rho(\bZ)=h(\bX, Y)$,  or (c)~$\rho(\bZ) =h(\mathrm{Index}(\bX, Y))$.
\end{col}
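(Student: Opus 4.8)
\textbf{Proof plan for Corollary~\ref{col:invalid choice of z}.}
The plan is to show, in each of the three cases, that one can choose $\rho(\cdot)$ so that $\rho(\bZ)$ recovers enough information about $Y$ to collapse the conditional entropy $H(Y\mid\bX_v,\rho(\bZ))$ strictly below $H(Y\mid\bX_v)$, thereby producing a positive quantity $C'=H(Y\mid\bX_v)-H(Y\mid\bX_v,\rho(\bZ))$ as required by the hypothesis of Proposition~\ref{prop:violation_of_inv}. The key observation is that, because of the noise $\epsilon_v$ in the generation $Y=g_v(\bX_v,\epsilon_v)$ and Assumption~\ref{ass:unexplained_conditional} (which forces $H(Y\mid\bX_v)-H(Y\mid\bX_v,\bX_2)\ge\gamma>0$ for any distinct feature $\bX_2$, in particular for $\bX_2$ carrying information about $Y$ itself), $Y$ is not a deterministic function of $\bX_v$ and there is residual uncertainty in $Y$ given $\bX_v$ that can be removed by conditioning on something correlated with $Y$.

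First I would handle case (a): take $\rho(\bZ)=h(Y)$ with $h$ injective. Since $h$ is injective, $\sigma(h(Y))=\sigma(Y)$, so $H(Y\mid\bX_v,h(Y))=0$, whereas $H(Y\mid\bX_v)\ge\gamma>0$ by Assumption~\ref{ass:unexplained_conditional} (applying it with any distinct second feature, or simply noting $Y$ is non-degenerate given $\bX_v$). Hence $C'=H(Y\mid\bX_v)>0$ and Condition~\ref{cond:invariance} is violated. Case (b), $\rho(\bZ)=h(\bX,Y)$, is immediate from case (a): since $h$ is injective in its arguments, $h(\bX,Y)$ determines $Y$, so again $H(Y\mid\bX_v,\rho(\bZ))=0<H(Y\mid\bX_v)$. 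Case (c), $\rho(\bZ)=h(\mathrm{Index}(\bX,Y))$, is the one requiring a small extra argument: the map $(\bx_i,y_i)\mapsto i$ is injective on the observed sample (distinct data points get distinct indices, or more carefully one can assume the sample points are distinct, which holds generically), so composing with injective $h$ gives a function of the index that recovers $i$ and hence recovers $y_i$ on the empirical distribution; therefore the empirical conditional entropy $H(Y\mid\bX_v,\rho(\bZ))=0$ on $\cD$, while $H(Y\mid\bX_v)\ge\gamma>0$. In all three cases we then invoke Proposition~\ref{prop:violation_of_inv} to conclude the existence of a mask $\Phi'\neq\Phi_v$ with $\hat\cL(\Phi_v)>\hat\cL(\Phi')$, i.e.\ ZIN fails to identify the invariant features.

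The main obstacle is making case (c) rigorous at the population versus empirical level: $\mathrm{Index}$ is only well defined relative to a finite dataset, so the statement ``$\rho(\bZ)=h(\mathrm{Index}(\bX,Y))$'' must be interpreted as an assignment of weights to the $n$ sample points, and one must argue that on this empirical distribution the indexing map is injective onto $\{1,\dots,n\}$ and therefore recovers $Y$ exactly. This needs either the (mild) assumption that the sampled pairs are distinct, or a tie-breaking convention; once that is granted, the entropy computation is the same trivial one as in (a). A secondary point to check is that the quantity $C'$ obtained is a genuine fixed positive constant (not vanishing with $n$ or with the choice of $\Phi$), which follows because the lower bound $H(Y\mid\bX_v)\ge\gamma$ in Assumption~\ref{ass:unexplained_conditional} is uniform; this is exactly what Proposition~\ref{prop:violation_of_inv} consumes.
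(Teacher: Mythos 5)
Your proposal is correct and follows essentially the same route as the paper: in each case you note that $\rho(\bZ)$ determines $Y$, so $H(Y\mid\bX_v,\rho(\bZ))=0$, while Assumption~\ref{ass:unexplained_conditional} gives $H(Y\mid\bX_v)\ge\gamma>0$, and then you invoke Proposition~\ref{prop:violation_of_inv}. Your extra care in case (c) about interpreting $\mathrm{Index}$ on the empirical sample is a reasonable elaboration of the paper's terser identity $H(Y\mid\bX_v,h(\mathrm{Index}(\bX,Y)))=H(Y\mid\bX_v,\bX,Y)=0$, but it does not change the argument.
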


We now discuss the necessity of Condition \ref{cond:non-inv}, with an additional assumption.
\begin{ass}
\label{ass:invariant_cascade}
If two features are invariant w.r.t.~an environment partition, then the concatenated features are also invariant. That is, for $\bX_1, \bX_2\subset\bX$ and $\rho(\cdot)$, if $H(Y|\bX_1) - {H}(Y|\rho(\bZ), \bX_1)=0$ and  $H(Y|\bX_2) - {H}(Y|\rho(\bZ), \bX_2)=0$, we have $H(Y|\bX_1, \bX_2)-{H}(Y|\rho(\bZ), \bX_1, \bX_2) = 0$.
\end{ass}
\begin{prop}
\label{prop:violation_of_sp}
With Assumptions \ref{ass:dense_function_space}, \ref{ass:unexplained_conditional} and \ref{ass:invariant_cascade}, if Condition \ref{cond:non-inv} is violated, i.e., there exists a spurious feature $\bX_s^k \in \bX_s$ such that
$H(Y|\bX_{s}^{k}) - {H}(Y|\bX_{s}^{k}, \rho(\bZ)) = 0$  for any  $\rho(\cdot)$, 
then there exists a feature mask $\Phi' \neq \Phi_v$ with $\hat \cL(\Phi_v) > \hat \cL(\Phi')$.
\end{prop}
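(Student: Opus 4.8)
\textbf{Proof proposal for Proposition~\ref{prop:violation_of_sp}.}

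The plan is to exhibit an explicit mask $\Phi'$ that \emph{adds} the offending spurious feature $\bX_s^k$ to the invariant block and show its objective value $\hat\cL(\Phi')$ is strictly smaller than $\hat\cL(\Phi_v)$. Concretely, I would take $\Phi'$ to be the mask selecting $[\bX_v;\bX_s^k]$. The two quantities to compare are $\hat\cL(\Phi_v)=\min_\omega\max_{\rho,\{\omega_k\}}\cL(\Phi_v,\omega,\{\omega_k\},\rho)$ and $\hat\cL(\Phi')$ for this enlarged feature set; in the cross-entropy setting these reduce (by Assumption~\ref{ass:dense_function_space}, up to the $\epsilon$ slack) to $\cR$-terms of the form $H(Y|\cdot)$ and $H(Y|\rho(\bZ),\cdot)$.

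The key steps, in order: (1)~Use the hypothesis that Condition~\ref{cond:non-inv} is violated for $\bX_s^k$, so $H(Y|\bX_s^k)-H(Y|\rho(\bZ),\bX_s^k)=0$ for \emph{every} $\rho$; combine this with the fact that $\bX_v$ is invariant for every $\rho$ (which follows from Condition~\ref{cond:invariance} as remarked in the text, or simply holds for the invariant block), so that $H(Y|\bX_v)-H(Y|\rho(\bZ),\bX_v)=0$ for every $\rho$. (2)~Apply Assumption~\ref{ass:invariant_cascade} with $\bX_1=\bX_v$ and $\bX_2=\bX_s^k$ to conclude that the concatenation $[\bX_v;\bX_s^k]$ is also invariant w.r.t.~every environment partition, i.e. $H(Y|\bX_v,\bX_s^k)-H(Y|\rho(\bZ),\bX_v,\bX_s^k)=0$ for all $\rho$. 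Hence the inner $\max_\rho$ of the invariance penalty for $\Phi'$ is zero, so $\hat\cL(\Phi')$ equals just the ERM term, which (up to $\epsilon$) is $H(Y|\bX_v,\bX_s^k)$. (3)~For $\Phi_v$ itself, the invariance penalty also vanishes (invariant block), so $\hat\cL(\Phi_v)$ is, up to $\epsilon$, $H(Y|\bX_v)$. (4)~Invoke Assumption~\ref{ass:unexplained_conditional} with the distinct features $\bX_v$ and $\bX_s^k$: $H(Y|\bX_v,\bX_s^k)\le H(Y|\bX_v)-\gamma$. Therefore $\hat\cL(\Phi')\le H(Y|\bX_v,\bX_s^k)+(\text{slack})\le H(Y|\bX_v)-\gamma+(\text{slack})<H(Y|\bX_v)-(\text{slack})\le\hat\cL(\Phi_v)$, provided the $\epsilon$-slack is small relative to $\gamma$ — and since $\lambda$ is finite the penalty-side slack is also controlled, exactly as in the regime already assumed in Theorem~\ref{thm:identifibility}. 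Since $\Phi'\neq\Phi_v$, this proves the claim.

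The main obstacle is bookkeeping the approximation slack carefully: $\hat\cL$ is defined through a $\min_\omega\max_{\rho,\{\omega_k\}}$ saddle problem, and Assumption~\ref{ass:dense_function_space} only gives $\epsilon$-optimal classifiers, so each occurrence of an $H(Y|\cdot)$ term is really bracketed within $\epsilon$ (and each penalty term within $2\lambda\epsilon$-type quantities). I would need to show the net error is strictly dominated by the gap $\gamma$ coming from Assumption~\ref{ass:unexplained_conditional}; this is where one re-uses the parameter regime on $\epsilon$ and $\lambda$ from Theorem~\ref{thm:identifibility} (or states a clean sufficient smallness condition). A secondary subtlety is justifying that the $\max_\rho$ over the penalty for $\Phi'$ is attained/approached at value $0$ rather than something negative — but the penalty $\sum_k[\cR_{\rho^{(k)}}(\omega,\Phi)-\cR_{\rho^{(k)}}(\omega_k,\Phi)]$ is nonnegative up to $\epsilon$ since $\omega_k$ is the minimizer on environment $k$, so $0$ is indeed the relevant value, and everything goes through as in the sufficiency proof.
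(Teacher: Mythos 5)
Your proposal is correct and follows essentially the same route as the paper's proof: take $\Phi'$ to be the mask selecting $[\bX_v;\bX_s^k]$, use Assumption~\ref{ass:invariant_cascade} (together with the violated Condition~\ref{cond:non-inv} and the invariance of $\bX_v$) to kill the penalty for $\Phi'$, and then apply Assumption~\ref{ass:unexplained_conditional} to gain the $\gamma$ gap, with the $\epsilon$-slack controlled by requiring $\epsilon<\gamma/(2(1+2\lambda))$. Your write-up is in fact slightly more explicit than the paper's about where Assumption~\ref{ass:invariant_cascade} is invoked, but the argument is the same.
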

%Proposition~\ref{prop:violation_of_sp} can be shown in a similar manner to the second step of the proof of Theorem~\ref{thm:identifibility}. 
Since there exists a spurious feature $\bX_s^k$ that is ``invariant'' in all possible environment partition, adding this feature to  $\Phi_v$ does not induce any invariance penalty but increases the prediction power. Thus, such a feature mask can achieve a smaller loss than $\hat \cL(\Phi_v)$. For completeness, we provide a proof  in \Cref{sec:proof_prop2}. Proposition~\ref{prop:violation_of_sp} again indicates in theory that $\bZ$ should be sufficiently diverse and informative so that each spurious feature can be recognized.%\todo{put the choie of z in. here?}

\subsection{Choice of the Auxiliary Information}
\label{sec:disscussionZ}
Conditions \ref{cond:invariance} and \ref{cond:non-inv} present theoretical requirements of $\bZ$ to successfully learn invariant features. We now discuss how to find such auxiliary information given certain prior knowledge.

\begin{wrapfigure}{r}{.5\textwidth}
\vspace{-1em}
\centering
    \includegraphics[scale=0.2]{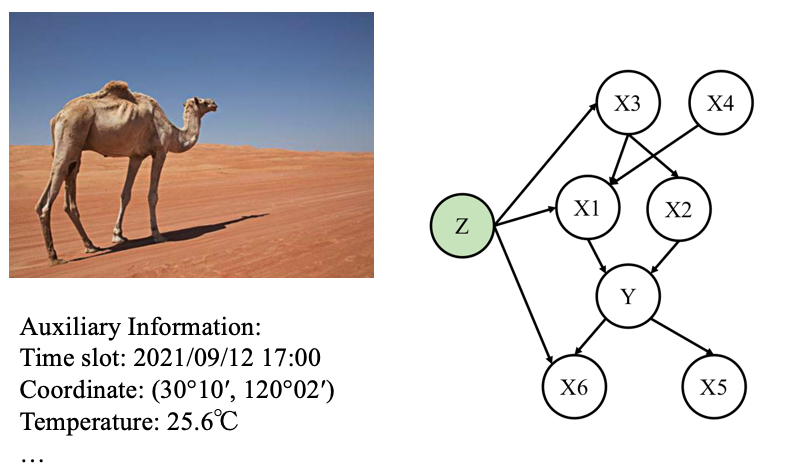}
    \caption{An example of $Z$ satisfying Condition~\ref{cond:invariance}.}
    \label{fig:Z_example}
    %^\vspace{-0.5em}
\end{wrapfigure}

Take Fig.~\ref{fig:Z_example} for example. When we collect a data point (e.g., taking a photo), there often exists some meta information (e.g., time slot, coordinate, and temperature). Suppose that the image is generated according to the causal graph on the right panel of Fig.~\ref{fig:Z_example}. In this case, the invariant feature $\bX_v$ consists of $X_1$ and $X_2$, and the meta information can be used as $\bZ$ which is colored in green in the causal graph.  This  is because $H(Y|X_1, X_2) =H(Y|X_1, X_2, \bZ) $ or equivalently $Y\hspace{-2pt}\perp\hspace{-2pt}\bZ\mid X_1, X_2$, so that Condition~\ref{cond:invariance} holds. This example shows some evidence of choosing the auxiliary information: $\bZ$ and $Y$ should be d-separated by $\bX_v$. There are  other examples satisfying Condition~\ref{cond:invariance}, which are given Fig.~\ref{fig:feasible_zs} in Appendix~\ref{app:chooseZ}. We also illustrate some failure choices in Fig.~\ref{fig:infeasible_z} in Appendix~\ref{app:chooseZ}. Specifically, if $\bZ$ is a child  or a direct cause of $Y$, then it contains additional information of $Y$ conditional on the invariant feature, violating Condition~\ref{cond:invariance}. This kind of information cannot be used  in our framework. 

To summarize, with some prior knowledge on the path between $\bZ$ and $Y$, we suggest to choose as much auxiliary information as possible  under the following condition:~{\textit{A feasible choice of $\bZ$ should satisfy Condition~\ref{cond:invariance} and should be correlated with certain variables in the causal graph of the SCM. The path between $\bZ$ and $Y$ should be d-separated by $\bX_v$ (e.g., in Figs.~\ref{fig:Z_example} and \ref{fig:feasible_zs}) or there is no path between $\bZ$ and $Y$. Notably, $\bZ$ cannot be the parent or child of $Y$ as shown in Fig.~\ref{fig:infeasible_z}.}}

\section{Experiments}
\label{sect:experiment}
This section empirically verifies our theoretic analysis and the effectiveness of ZIN on  both synthetic and real world datasets. We compare ZIN with several existing methods: ERM, IRM, \citep{Arjovsky2019Invariant}, group DRO \cite{sagawa2020distributionally}, EIIL \citep{creager2021environment}, HRM \cite{liu2021heterogeneous}, and LfF \citep{Nam2020LearningFF}. We provide the ground-truth partition to IRM and group DRO.  Notice that LfF only works for classification tasks.  

We implement both $\rho$ and $f$ in ZIN as a two-layer MLP with $32$ hidden units. We adopt the first order approximation of Eq.~\eqref{eqn:main_formula_irm}, as described in \Cref{sec:moredetails}.  The  number of inferred environments $K$ is set to be $2$ as default. We implement $\Phi$ as a two-layer MLP for the synthetic dataset and house price prediction,  ResNet-18 \citep{He2016CVPR} for CelebA, and  1D CNN for Landcover.   More details are provided in \Cref{sec:moredetails}.  
% \todo{github..}

% and will be compared on classification tasks with real-world data.
%LfF learns a robust model by directly boosting from  wrongly specified samples of shallow neural networks.
% \begin{itemize}
%     \vspace{-0.2cm}
%     \item Empirical risk minimization (ERM): ERM seeks to minimize the empirical loss over the training data.
%         \item Invariant risk minimization ({IRM}) \citep{Arjovsky2019Invariant}:  its performance with ground-truth environment partition serves as an upper bound. 
%     \item Environment inference for invariant learning ({EIIL}) \citep{creager2021environment}: a two-stage algorithm where a trained biased model based on $(\bX, Y)$ is used for splitting environments, followed by applying IRM to the inferred environment partition. 
%     \item Heterogeneous risk minimization (HRM) \citep{liu2021heterogeneous}: HRM iterates between environment inferring by clustering  and invariance learning.
% \end{itemize}

%  For the task on synthetic datasets in Section~\ref{sect:synthetic} and the house price prediction task in Section~\ref{sect:exp_real_world}, we also adopt a two-layer MLP with $32$ hidden units for function $f$. 
 
%  For the CelebA task in  Section~\ref{sect:exp_real_world}, feature extractor $\Phi$ uses ResNet-18 \citep{He2016CVPR}.

\subsection{Synthetic Dataset}
\label{sect:synthetic}
\textbf{Temporal Heterogeneity}.~~We consider temporal heterogeneity with distributional shift w.r.t.~time. Let $t \in [0, 1]$ be time index and $X_v(t)\in\mathbb R$ the invariant feature. The data generation process is
{\small
\begin{align*}
       X_{v}(t)&\sim \left\{\hspace{-5pt}\begin{array}{ll}
           \cN(1, \sigma^2), ~~\ {w.p.}~0.5,\\
           \cN(-1, \sigma^2), {w.p.}~0.5,\\
        \end{array}\right.\hspace{-2pt}Y(t) \sim\left\{\hspace{-5pt}\begin{array}{ll}
           \mathrm{sign}(X_v(t)),~~\ w.p.~ p_v,\\
           -\mathrm{sign}(X_v(t)), w.p.~p_v',\\
        \end{array} \right.\hspace{-2pt}X_{s}(t)\sim \left\{\hspace{-5pt}\begin{array}{ll}
           \cN(Y(t), \sigma^2),~~\ {w.p.}~p_s(t),\\
           \cN(-Y(t), \sigma^2),{w.p.}~p_s'(t),\\
        \end{array}\right.
        \end{align*}
}where  $p_v$ is a constant w.r.t.~$t$, indicating a stable correlation between $Y(t)$ and $X_v(t)$, $p_v'=1-p_v$, and $p_s'(t)=1-p_s(t)$. Notice here $p_s(t)$ varies with time $t$. A similar setting with two-dimensional spatial variable is considered in Appendix~\ref{sec:spatial}. Our goal is to learn a model that purely relies on $X_{v}$.  We simulate two heterogeneous environments along time, namely,  $\{[0, 0.5), [0.5, 1]\}$,  and $p_s(t)$ will be set differently. We use tuple of $p_s(t)$ in the two environments to denote a simulated case. For example,  $(0.999,0.7)$ stands for $p_s(t)=0.999, t \in [0, 0.5)$ and $p_s(t)=0.7,t \in [0.5, 1]$. We evaluate the performance on four distinct test environments with $p_s \in \{0.999, 0.8, 0.2, 0.1\}$ and $p_v$ being constant.  We use $t$  as the auxiliary information.  More details are provided in \Cref{sec:moredetails} %The spurious feature $X_s(t)$ is generated from $Y(t)$ as follows:

\textbf{Results}.~~Table~\ref{tab:syn_temporal} reports the test accuracy. In all simulated settings, the worst accuracy of ERM is much lower  than the mean accuracy, indicating that ERM tends to rely on spurious feature $X_s$. EIIL can improve the worst accuracy in some cases, e.g., when $p_s(t)=(0.999,0.8)$ and $p_s(t)=(0.999,0.9)$. However, its performance is even worse than ERM for some other settings. This may be attributed to the first stage of EIIL, where the trained biased model  is not guaranteed and may learn both spurious and invariant features. The proposed method ZIN improves the worst test accuracy significantly. For instance, when $p_s(t)=(0.999,0.7)$ and $p_v=0.9$, ZIN outperforms ERM and EIIL by over 28\% and 65\%, respectively. Moreover, ZIN is very close to IRM that knows the ground-truth environment partition, showing that ZIN can infer the environments effectively. Finally, it seems that HRM does not  learn a useful model for this classification task.
% \footnote{As noted in the released codes of HRM, \textit{`there are hyper-parameters to be tuned for the whole framework, which are different among different tasks ...'}. We did not get better results with default and several other sets of hyper-parameters, so decided not to use HRM in the CelebA classification experiments. However, HRM performs normally with the house price prediction task.\label{ft:HRM}}
\begin{table*}[t]
\caption{Test Mean and Worst accuracy (\%) on four temporal heterogeneity synthetic datasets.}
\vspace{-0.5em}
%\caption{Test accuracy on the temporal synthetic dataset. $p_v$, the correlation between the invariant feature $X_v$ and label $Y$,  remains unchanged in the training time domain $t \in [0, 1]$. $p_s(t)$ is the correlation between the spurious feature $X_s$ and label $Y$. ``$0.999,0.7$'' stands for $p_s(t)|_{t \leq 0.5}=0.999$ and $p_s(t)|_{t > 0.5}=0.7$. The testing performance is evaluated on four discrete environments where $p_s(t) \in \{ 0.999, 0.8, 0.2, 0.1\}$ where $p_v$ remains the same with that in the training environment. The accuracy  on each testing environment is calculated. We report the averaged accuracy and worst accuracy of four environments in the column ``Mean" and ``Worst", respectively.}
\begin{center}
\resizebox{\linewidth}{!}{
\begin{tabular}{c|c|c|c|c|c|c|c|c|c|c|c|c|c}
\toprule
 \multirow{3}{*}{Env Partition} & $p_s(t)$ & \multicolumn{4}{|c}{0.999, 0.7} & \multicolumn{4}{|c}{0.999, 0.8} & \multicolumn{4}{|c}{0.999, 0.9}  \\
\cline{2-14}
~& $p_v$ & \multicolumn{2}{|c}{0.9}& \multicolumn{2}{|c}{0.8} & \multicolumn{2}{|c}{0.9} & \multicolumn{2}{|c}{0.8} &\multicolumn{2}{|c}{0.9} & \multicolumn{2}{|c}{0.8} \\
\cline{2-14}
~&Test Acc & Mean & Worst & Mean & Worst  & Mean & Worst & Mean & Worst &Mean & Worst & Mean & Worst  \\
\midrule
 \multirow{3}{*}{No}& ERM & 75.37& 57.31 & 59.65 & 25.81 & 68.72 & 41.97 & 55.90 & 15.07& 60.61 & 23.39& 52.85 & 7.57 \\
~&  EIIL & 38.41& 16.80 & 64.89 & 49.15 & 50.77 & 46.67 & 68.36 & 56.35 &61.99&53.81&70.10&59.36 \\
~&  HRM & 50.00&49.99  & 49.98 & 49.93 & 50.00 & 49.98 & 50.01 & 49.99 &50.00 & 49.98&49.99& 49.97 \\
~&  ZIN & \textbf{87.50}& \textbf{85.36} & \textbf{77.85} & \textbf{75.39} & \textbf{86.35} & \textbf{82.91} & \textbf{76.79} & \textbf{72.77} &\textbf{83.71} &\textbf{75.89}&\textbf{73.55}&\textbf{64.69} \\
   \midrule
Yes&  IRM & 87.57& 85.47 & 77.99 & 75.65 & 86.57 & 83.25 & 77.00 & 73.39 &83.99&76.48&73.84&65.33 \\
\bottomrule
\end{tabular}
}
\vspace{-.5em}
\label{tab:syn_temporal}
\end{center}
\end{table*}
\textbf{Ablation Study}.~~In Appendix~\ref{sec:ablationZK}, we  conduct ablation study to verify our theoretical results in Section~\ref{sec:ZIN} by choosing different auxiliary variables. We also empirically study the choice of hyper-parameter $K$. A notable observation is that  $K$ has a relatively small impact, as shown in Fig.~\ref{fig:differentK}.

\subsection{Real World Datasets}
\label{sect:exp_real_world}
\noindent \textbf{House Price Prediction}. This experiment considers a real world regression dataset of house sales prices from Kaggle ({https://www.kaggle.com/c/house-prices-advanced-regression-techniques}). The target variable is house price and each sample contains $17$ predictive variables like the built year of the house, number of bedrooms, etc. The dataset is split according to the built year, with training dataset in period $[1900,1950]$ and test dataset in period $(1950,2000]$. { We normalize the prices of the houses with the same built year, and our target is to predict the normalized price.}  We choose the built year as auxiliary information for ZIN. As there is no well-defined ground-truth  partition for IRM, we manually split the training dataset equally into $5$ segments with $10$-year range in each segment. 

Table~\ref{tab:house_price} reports the mean squared error (MSE), and ZIN again outperforms other methods on test dataset. We also investigate whether this type of additional information $\bZ$, together with $\bX$, helps EIIL and HRM. The corresponding results are marked with $(+\bZ)$ in \Cref{tab:house_price}. We observe that the results of EIIL and HRM with additional information $\bZ$ are worse than those without using $\bZ$. This implies that such information may not be directly utilized by or even can harm the two methods. 
         %RM & 0.3466 & 0.3521&0.5248\\
         %HRM(+$\bZ$) & 0.3190 & \\ %0.3190131187438965 respectively. 

% \begin{table}[t]
%     \caption{Results (mean squared error) on house price prediction.}
%     \small
%     \centering
%     \begin{tabular}{c|c|c|c}
%          \toprule
%          Method &  Train  & Test Mean & Test Worst \\
%          \midrule
%          ERM & \textbf{0.1007}&0.3597& 0.4968 \\ 
%          EIIL & 0.6841&0.9625&1.3909\\
%          HRM($+\bZ$) & 0.6912 & 0.9701 & 1.4201\\
%          HRM & 0.3466 & 0.3521&0.5248\\
%          HRM($+\bZ$) & 0.3190 & 0.3764 & 0.5726\\ %0.3190131187438965 (0.3764491575000001, 0.5725948900000001)
%          IRM &0.1112&0.3328&{0.4913} \\
%          ZIN & 0.2275&\textbf{0.3118}& \textbf{0.4285}\\
%          \bottomrule
%     \end{tabular}
%     \label{tab:house_price}
% \end{table}

\noindent \textbf{CelebA}. This task is to predict \textit{Smiling} based on the image from CelebA \citep{liu2015faceattributes}, and by construction the target is spuriously correlated with \textit{Gender}. We  have access to the meta annotations of CeleA images and they serve as $\bZ$ in our framework. In particular, we pick seven features as potential auxiliary variables: \{\textit{Young}, \textit{Blond Hair}, \textit{Eyeglasses}, \textit{High Cheekbones}, \textit{Big Nose}, \textit{Bags Under Eyes}, \textit{Chubby}\}. Notice that the annotation of \textit{Gender} is not provided to ZIN, EIIL, HRM and ERM, but is  made available to IRM and group DRO to create an oracle environment partition. 
\begin{table}[t]
\hspace{-10pt}
  \begin{minipage}{.52\textwidth}
    \caption{House price prediction (MSE).}
    \centering
     \resizebox{0.85\linewidth}{!}{
    \begin{tabular}[t]{c|c|c|c|c}
         \toprule
         Method & Env Index & Train  &  Test Mean  &  Test Worst \\
         \midrule
           IRM & Yes &0.1327&0.4456&{0.6821} \\
           {group DRO}& Yes &0.1213&0.6887&{1.0050} \\
         \midrule
         ERM & No& \textbf{0.1141}&0.4764& 0.6703 \\ 
         EIIL & No& 0.6841&0.9625&1.3909\\
         EIIL(+$\bZ$)& No & 0.6912 & 0.9701 & 1.4201\\
         HRM & No& 0.3466 & 0.4621&0.5721\\
          HRM(+$\bZ$)& No & 0.3190 & 0.4221 & 0.5873\\
         ZIN& No & 0.2275&\textbf{0.3339}& \textbf{0.4815}\\

         \bottomrule
    \end{tabular}
    }
    \label{tab:house_price}
    \end{minipage}
    \hspace{-10pt}
    \begin{minipage}{.52\textwidth}
          \centering
    \caption{Accuracy (\%) on CelebA task.}
    \centering
     \resizebox{0.94\linewidth}{!}{
    \begin{tabular}[t]{c|c|c|c|c}
    \toprule
        \hspace{-2pt}Method\hspace{-2pt}& Env Index&Train & Test Mean& Test Worst \\
    \midrule
        IRM   &Yes&81.30$\pm$1.53&78.44$\pm$0.48&75.03$\pm$1.29 \\
{group DRO}&Yes &89.32$\pm$0.67&74.28$\pm$0.22& 58.11$\pm$0.61\\
     \midrule
     ERM &No  &\textbf{90.97}$\pm$0.66&70.76$\pm$0.26&47.58$\pm$0.46 \\
     LfF&No &59.89$\pm$0.72&52.97$\pm$0.56&44.38$\pm$2.01 \\
     EIIL&No &90.01$\pm$0.73&71.45$\pm$0.29&50.48$\pm$1.98 \\
    %  ~ & HRM & & & \\
     ZIN(1)\hspace{-2pt}&No & 90.62$\pm$0.78 & 70.79$\pm$0.61& 47.62$\pm$0.98\\
     ZIN(4)\hspace{-2pt}&No & 83.57$\pm$1.40 & 75.20$\pm$0.71& 63.47$\pm$1.41\\
    ZIN(7)\hspace{-2pt}&No &83.06$\pm$1.28&\textbf{76.29}$\pm$0.60&\textbf{67.27}$\pm$1.15\\
     \bottomrule
    \end{tabular}
    }
    \label{tab:celeba_dataset}
    \end{minipage}
    \hfill
    \vspace{-.5em}
  \end{table}

Empirical results are given in Table~\ref{tab:celeba_dataset}, where ZIN(\#) represents ZIN using \#  of the seven features as $\bZ$. ERM achieves the highest training accuracy, while only has 47.58\% worst test accuracy. The  worst test performances of ERM, EIIL and LfF indicate that they may not learn the causal features. Finally, we observe that ZIN performs better when more auxiliary information is provided; specifically,  ZIN(7) achieves the best test performance based on the mixed dataset. This observation validates our discussion on choosing the auxiliary information in \Cref{sec:disscussionZ}.

In \Cref{sec:moreexpresults},  Fig.~\ref{fig:differentK}   reports the results of different choices of hyper-parameter $K$ and Fig.~\ref{fig:visualization_of_envs_celeba} visualizes the inferred environments in this experiment. Again, we find that $K$ has a relatively small impact on the proposed algorithm.

\begin{wraptable}{r}{0.45\textwidth}
\vspace{-1em}
\centering
    \caption{Test Accuracy (\%) on Landcover task.}
    \vspace{-0.5em}
    \centering
    \resizebox{0.85\linewidth}{!}{
    \begin{tabular}[t]{c|c|c}
    \toprule
        Method\hspace{-2pt}& IID Test & OOD Test \\
     \midrule
     ERM \cite{xie2020risk}  &75.92&58.31 \\
     \hspace{-4pt}ERM (+climate) \cite{xie2020risk}\hspace{-4pt} &76.58&54.78 \\
    LfF & 66.24 & 61.69\\
    EIIL & 72.61 & 64.79 \\
     ZIN (climate) & 72.56 & 62.50 \\
     ZIN (location)  & 72.18& \textbf{66.06}\\
     \bottomrule
    \end{tabular}
    }
    \label{tab:lancover_dataset}
\end{wraptable}
\noindent \textbf{Landcover}. Our final task is land cover prediction that classifies the land cover type (e.g., grasslands) from satellite data \cite{GISLASON2006294,Russwurm_2020_CVPR_Workshops}. We take the same setup from \cite{xie2021innout}: the time series data input dimension is $46\times8$; the target $Y$ is one of six land cover classes; six climate related variables are the auxiliary variables. We also consider non-African locations as training data and Africa as test data. In addition, we take  location (latitude and longitude) as  possible additional information. For ZIN, we average climate variables over the time dimension and use those means to predict environments.  In Table~\ref{tab:lancover_dataset}, we see that  ZIN with location achieves the best OOD performance, while ZIN with climate variables is slightly worse. This is because climate variables indeed contain some information for predicting the target, violating our conditions on $\bZ$. EIIL also has a  good performance in this task, and we conjecture that its first stage has learned the spurious features as desired. However, 
together with previous empirical results, it can be risky as the first stage may not be guaranteed. 

\subsection{Discussion on Auxiliary Information for Related Methods} The proposed framework ZIN relies on additional auxiliary variables. It is interesting to ask whether this type of information is also useful to related methods such as EIIL and HRM. In many scenarios, the additional variable $\bZ$ may have no or little information about the target, e.g., 1)~in time series tasks, the time index is rarely used to predict the label; and 2)~in the CelebA classification task,  features like \textit{Young} and \textit{Eyeglasses} are likely to provide little information about predicting \textit{Smiling}. Then $\bZ$ may not be useful to  ERM and EIIL. In \Cref{sect:exp_real_world}, we also provide $\bZ$ to EIIL and HRM, and the experimental result shows that it does not help or even harms the test performance. 

% \begin{table}[]
    
% \end{table}
% \begin{table}[H]
%     \label{tab:celeba_res}
%     \caption{Accuracy (\%) on CelebA classification task.}

%     \centering
%     \resizebox{0.98\linewidth}{!}{
%     \begin{tabular}{c|c|c|c|c}
%     \toprule
%       Env Info & Method&Train & Test Mean& Test Worst \\
%      \midrule
%      \multirow{6}{*}{No}&ERM   &\textbf{90.97}±0.66&70.76±0.26&47.58±0.46 \\
%      ~&EIIL &44.78±5.85&55.26±1.17&43.12±8.98 \\
%     %  ~ & HRM & & & \\
%      ~&ZIN(1) & 90.62±0.78 & 70.79±0.61& 47.62±0.98\\
%      ~&ZIN(4) & 83.57±1.40 & 75.20±0.71& 63.47±1.41\\
%      ~&ZIN(7) &83.06±1.28&\textbf{76.29}±0.60&\textbf{67.27}±1.15\\
%      \midrule
%      Yes & IRM   &81.30±1.53&78.44±0.48&75.03±1.29 \\
%      \bottomrule
%     \end{tabular}
%     }
%     \label{tab:celeba_dataset}
% \end{table}

\section{Concluding Remarks}
This paper investigates when and how to learn invariance from heterogeneous data without explicit environment indexes. We first show that learning invariant models in this case is generally impossible. Then we propose ZIN to jointly learn environment partition and invariant representation using some additional auxiliary variables. We provide theoretic guarantees  in both feature selection and linear feature learning scenarios.   Experimental results verify our analysis and demonstrate an improved performance of ZIN over existing methods.
Our results also raise the need of future works to make the role of inductive biases more explicit, when learning invariance from heterogeneous data without environment indexes. A limitation of the present work is the lack of theoretic guarantee with general nonlinear feature extractors, which is also an open problem for IRM even with environment partition.
% A future work is to extend the theoretic results to  more general feature extractor functions.
% As in unsupervised learning of disentangled representations \citep{Locatelloc2019},
% Alternatively, one may also take into account weak forms of supervision or additional structural information.

\newpage
\bibliography{reference, zhushengyu}
\bibliographystyle{plainnat}

%%%%%%%%%%%%%%%%%%%%%%%%%%%%%%%%%%%%%%%%%%%%%%%%%%%%%%%%%%%%
\newpage
\section*{Checklist}
\label{gen_inst}

%%% BEGIN INSTRUCTIONS %%%
% The checklist follows the references.  Please
% read the checklist guidelines carefully for information on how to answer these
% questions.  For each question, change the default \answerTODO{} to \answerYes{},
% \answerNo{}, or \answerNA{}.  You are strongly encouraged to include a {\bf
% justification to your answer}, either by referencing the appropriate section of
% your paper or providing a brief inline description.  For example:
% \begin{itemize}
%   \item Did you include the license to the code and datasets? \answerYes{See Section~\ref{gen_inst}.}
%   \item Did you include the license to the code and datasets? \answerNo{The code and the data are proprietary.}
%   \item Did you include the license to the code and datasets? \answerNA{}
% \end{itemize}
% Please do not modify the questions and only use the provided macros for your
% answers.  Note that the Checklist section does not count towards the page
% limit.  In your paper, please delete this instructions block and only keep the
% Checklist section heading above along with the questions/answers below.
% %%% END INSTRUCTIONS %%%

\begin{enumerate}

\item For all authors...
\begin{enumerate}
  \item Do the main claims made in the abstract and introduction accurately reflect the paper's contributions and scope?
    \answerYes{}
  \item Did you describe the limitations of your work?
    \answerYes{} In the conclusion section.
  \item Did you discuss any potential negative societal impacts of your work?
    \answerYes{} In Appendix~\ref{sec:impact}.
  \item Have you read the ethics review guidelines and ensured that your paper conforms to them?
    \answerYes{}
\end{enumerate}

\item If you are including theoretical results...
\begin{enumerate}
  \item Did you state the full set of assumptions of all theoretical results?
   \answerYes{}
        \item Did you include complete proofs of all theoretical results?
    \answerYes{}
\end{enumerate}

\item If you ran experiments...
\begin{enumerate}
  \item Did you include the code, data, and instructions needed to reproduce the main experimental results (either in the supplemental material or as a URL)?
    \answerYes{}
  \item Did you specify all the training details (e.g., data splits, hyperparameters, how they were chosen)?
    \answerYes{}
        \item Did you report error bars (e.g., with respect to the random seed after running experiments multiple times)?
    \answerYes{}
        \item Did you include the total amount of compute and the type of resources used (e.g., type of GPUs, internal cluster, or cloud provider)?
    \answerYes{}
\end{enumerate}

\item If you are using existing assets (e.g., code, data, models) or curating/releasing new assets...
\begin{enumerate}
  \item If your work uses existing assets, did you cite the creators?
    \answerYes{}
  \item Did you mention the license of the assets?
    \answerNA{}
  \item Did you include any new assets either in the supplemental material or as a URL?
    \answerNA{}
  \item Did you discuss whether and how consent was obtained from people whose data you're using/curating?
    \answerNA{}
  \item Did you discuss whether the data you are using/curating contains personally identifiable information or offensive content?
    \answerNA{}
\end{enumerate}

\item If you used crowdsourcing or conducted research with human subjects...
\begin{enumerate}
  \item Did you include the full text of instructions given to participants and screenshots, if applicable?
    \answerNA{}
  \item Did you describe any potential participant risks, with links to Institutional Review Board (IRB) approvals, if applicable?
    \answerNA{}
  \item Did you include the estimated hourly wage paid to participants and the total amount spent on participant compensation?
    \answerNA{}
\end{enumerate}

\end{enumerate}

%%%%%%%%%%%%%%%%%%%%%%%%%%%%%%%%%%%%%%%%%%%%%%%%%%%%%%%%%%%%
\newpage

\appendix
\section{More Theoretical Results}
\label{app:moretheory}

\subsection{Extension to Other Loss Functions}
\label{sec:extenstion_other_loss}
Sections~\ref{sect:theoretical_ana} and \ref{sec:necessary} focus on classification task with cross-entropy loss, to establish sufficient and necessary conditions for invariance identification. Similar results can be shown for other loss functions or tasks in a straightforward manner. 

To see this, notice that  $H(\cdot|\cdot)$  in Assumptions~\ref{ass:dense_function_space}-\ref{ass:unexplained_conditional} and Conditions \ref{cond:invariance}-\ref{cond:non-inv} is used to represent the optimal expected risk that coincides with the conditional entropy, with cross-entropy loss and when $\rho(\bz_i)$ gives exactly one environment. For other loss functions $l(\cdot, \cdot)$ like squared error, we can use $L(\cdot|\cdot)$  to represent the optimal expected risks, and $L(\cdot)$ denotes the optimal expected risk with no predictor variables, e.g., variance for the squared error loss. Replacing  $H(\cdot|\cdot)$ with $L(\cdot|\cdot)$, we can follow the same proof procedure to obtain similar sufficient and necessary conditions. This is summarized in  \Cref{thm:identifibility_other}  for completeness.
\begin{thm}
\label{thm:identifibility_other}
Consider  Assumptions \ref{ass:dense_function_space}-\ref{ass:unexplained_conditional} and Conditions \ref{cond:invariance}-\ref{cond:non-inv} where $H(\cdot|\cdot)$  is replaced with $L(\cdot|\cdot)$  accordingly. If $\epsilon < \frac{C\gamma\delta}{4\gamma + 2C  \delta L(Y)}$ and  $\lambda \in [ \frac{L(Y) +1/2\delta C}{\delta C -4\epsilon} - \frac{1}{2}, \frac{\gamma}{4\epsilon} - \frac{1}{2}]$, we have $\hat \cL(\Phi_v) < \hat \cL(\Phi)$ for all $\Phi \neq \Phi_v$,
where we assume $L(Y)<\infty$.  If  \Cref{cond:invariance} or \Cref{cond:non-inv} is violated, then  there exists a feature mask $\Phi' \neq \Phi_v$ so that $\hat \cL(\Phi_v) > \hat \cL(\Phi')$.
\end{thm}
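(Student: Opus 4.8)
\textbf{Proof proposal for Theorem~\ref{thm:identifibility_other}.}

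The plan is to mirror the proof of Theorem~\ref{thm:identifibility}, replacing the information-theoretic quantity $H(\cdot\mid\cdot)$ (the optimal cross-entropy risk) with the abstract optimal-risk functional $L(\cdot\mid\cdot)$ associated with the general loss $\ell$, and $H(Y)$ with $L(Y)$, checking at each step that the only properties of $H$ actually used are properties shared by $L$. Concretely, $L(Y\mid \bX')$ is defined as $\inf_{f}\bbE[\ell(f(\bX'),Y)]$ over the (rich) function class, $L(Y)$ as the optimal risk of a constant predictor, and $L(Y\mid \rho(\bZ),\bX')$ as $\min_{\{\omega_k\}}\sum_k\cR_{\rho^{(k)}}(\omega_k,\Phi)$ for the fixed mask $\Phi$. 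The key structural facts I would isolate and use are: (i) monotonicity, $L(Y\mid \bX_1,\bX_2)\le L(Y\mid\bX_1)$ and $L(Y\mid\rho(\bZ),\bX')\le L(Y\mid\bX')$, i.e.\ extra conditioning cannot hurt the optimum; (ii) the nonnegativity of the ``invariance gap'' $L(Y\mid\bX')-L(Y\mid\rho(\bZ),\bX')\ge 0$; and (iii) the upper bound $L(Y\mid\bX')\le L(Y)$ for any $\bX'$. All of Assumptions~\ref{ass:dense_function_space}--\ref{ass:unexplained_conditional}, Conditions~\ref{cond:invariance}--\ref{cond:non-inv}, and Assumption~\ref{ass:invariant_cascade} are \emph{stated} in terms of $H$ but, per the theorem hypothesis, are re-imposed verbatim with $H\mapsto L$, so I may invoke them directly.

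The forward (sufficiency) direction proceeds in the same two steps as Theorem~\ref{thm:identifibility}. First I would show $\hat\cL(\Phi_v)$ is small: by Assumption~\ref{ass:dense_function_space} (with $L$) the ERM term is within $\epsilon$ of $L(Y\mid\bX_v)\le L(Y)$, and by Condition~\ref{cond:invariance} (with $L$) every inferred environment leaves $\bX_v$ invariant, so the invariance penalty contributes at most an $O(\epsilon)$ slack from the approximate minimizers; hence $\hat\cL(\Phi_v)\le L(Y)+c_1\epsilon$ for an explicit constant $c_1$ depending on $\lambda$. Second, for any $\Phi\ne\Phi_v$ I split into cases. If $\Phi$ omits some invariant coordinate, Assumption~\ref{ass:unexplained_conditional} (with $L$) forces the ERM term up by at least $\gamma$ (minus $O(\epsilon)$), which dominates once $\lambda\le \frac{\gamma}{4\epsilon}-\frac12$. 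If $\Phi$ includes some spurious coordinate $\bX_1\subset\bX_s$, then Condition~\ref{cond:non-inv} (with $L$) gives a $\rho$ making $\bX_1$ non-invariant by at least $C$, Assumption~\ref{ass:spurious_cascade} (with $L$) propagates a $\delta C$ lower bound on the penalty even after the other selected features are adjoined, and the max over $\rho$ in Problem~\eqref{eqn:main_formula} picks this up; choosing $\lambda\ge \frac{L(Y)+1/(2\delta C)}{\delta C-4\epsilon}-\frac12$ makes $\lambda\,\delta C$ outweigh $L(Y)+O(\epsilon)$. The stated interval for $\lambda$ is nonempty precisely when $\epsilon<\frac{C\gamma\delta}{4\gamma+2C\delta L(Y)}$ (the same algebra as in Theorem~\ref{thm:identifibility} with $H(Y)\mapsto L(Y)$), and finiteness of $L(Y)$ is what guarantees the right endpoint is a real number. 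Combining the cases yields $\hat\cL(\Phi_v)<\hat\cL(\Phi)$ for all $\Phi\ne\Phi_v$.

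For the necessity clause, I would reuse Propositions~\ref{prop:violation_of_inv} and~\ref{prop:violation_of_sp} almost verbatim with $H\mapsto L$. If Condition~\ref{cond:invariance} (with $L$) fails, there is $\rho$ with $L(Y\mid\bX_v)-L(Y\mid\bX_v,\rho(\bZ))\ge C'>0$; then the candidate $\Phi_v$ already suffers an $\Omega(\lambda C')$ penalty, whereas a mask $\Phi'$ obtained from $\Phi_v$ by dropping a responsible invariant coordinate loses only $O(\gamma)$ in the ERM term — and since $\lambda$ is forced large (of order $1/\epsilon$) by the sufficiency regime, $\hat\cL(\Phi')<\hat\cL(\Phi_v)$. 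If Condition~\ref{cond:non-inv} (with $L$) fails, some spurious $\bX_s^k$ is invariant under every $\rho$; by Assumption~\ref{ass:invariant_cascade} (with $L$) adjoining it to $\Phi_v$ keeps the penalty zero while Assumption~\ref{ass:unexplained_conditional} (with $L$) strictly lowers the ERM term by $\gamma$, so $\Phi'=[\Phi_v$ with the $\bX_s^k$ bit set$]$ beats $\Phi_v$. The main obstacle is bookkeeping rather than conceptual: I must verify that nowhere in the original proof is a genuinely entropy-specific identity used (e.g.\ the chain rule $H(Y,\bX')=H(\bX')+H(Y\mid\bX')$ or the explicit one-hot-to-conditional-entropy coincidence) in a way that the abstract $L$ cannot supply; the subtle point is the $L(Y\mid\rho(\bZ),\bX')$ term for non-one-hot $\rho$, where $L$ is defined directly as the achieved minimum of the soft-weighted risk rather than via an entropy decomposition, so I would state the needed monotonicity/nonnegativity properties of this soft-weighted optimum as a short lemma and check the original argument only invokes those. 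Given that the Assumptions and Conditions are postulated afresh for $L$, this verification is the whole content of the proof.
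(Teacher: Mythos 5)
Your proposal is correct and matches the paper's approach exactly: the paper gives no separate proof of Theorem~\ref{thm:identifibility_other}, stating only that one replaces $H(\cdot|\cdot)$ with $L(\cdot|\cdot)$ and follows the same two-step argument as Theorem~\ref{thm:identifibility} (and Propositions~\ref{prop:violation_of_inv}--\ref{prop:violation_of_sp} for necessity). Your explicit isolation of the monotonicity/nonnegativity properties of the optimal-risk functional $L$ that the original argument actually relies on is a useful clarification of what the paper leaves implicit, but it is the same proof.
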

 \subsection{Beyond Feature Selection: Linear Feature Transformations}
 \label{sec:linearcase}
%In Section \ref{sec:ZIN}, we did not consider the feature transformation in the data generation process, i.e., both the invariant and spurious features are observed and we focus on ZIN's ability to select the invariant features. \ly{This serves for the need to illustrate the working conditions of  ZIN}.  In this section, we extend the problem from feature selection to feature learning: we will show that ZIN is able to learn the invariant feature given a scrambled feature  observation in a linear system  following  \citep{Arjovsky2019Invariant,rosenfeld2020risks}. We consider the same the data generation process with \cite{Arjovsky2019Invariant}. Specifically, 

In Section~\ref{sec:ZIN}, both the invariant and spurious features can be directly observed and we focus on the ability of ZIN to identify the invariant features.  In this section, we extend feature selection to feature learning, and show that ZIN is able to learn the invariant features given a scrambled  observation in a linear form  following  \citep{Arjovsky2019Invariant,rosenfeld2020risks}.  Specifically, we consider the same  data generation process as \cite{Arjovsky2019Invariant}:
\begin{equation}
\label{eqn:data_generation_process_linear}
\begin{aligned}
Y^e=\bX_v^e \cdot \beta +\epsilon_v,\quad\bX_v^e \perp\epsilon_v, \quad\mathbb E[\epsilon_v]=0;\qquad\bX^e= \bW\cdot [\bX_v^e;\bX_s^e],
\end{aligned}
\end{equation}
where $\beta\in\mathbb{R}^{d_v}$ and $\bW\in\mathbb{R}^{d\times(d_v+d_s)}$.  We assume that there exists $\tilde{\bW}\in\mathbb R^{d_v\times d}$ so that $\tilde{\bW}(\bW[\bx_v;\bx_s])=\bx_v$ for all $\bx_v$ and $\bx_s$. Both the feature extractor and predictor take a linear form, i.e., $\Phi$ takes values in $\mathbb R^{d\times d}$ and $\omega\in\mathbb R^{d}$. The prediction for $\bX$ is $\omega\circ\Phi(\bX)=(\Phi\bX)^T\omega$.

% Our prfevious proof is mainly based on that, either adding a spurious feature or lacking an invariant feature would induce a larger loss than $\hat\cL(\Phi_v)$. However, it is not easy to directly extend this idea to cases where  feature extractor $\Phi$ can take a more general form. 
% Nevertheless, given several identifiability results for IRM with linear transformations (e.g., \citep{Arjovsky2019Invariant,rosenfeld2020risks}), we now focus on  the linear case in which the data generation process is described by
% \begin{equation}
% \label{eqn:data_generation_process_linear}
% \begin{aligned}
% Y^e=\bX_v^e \cdot \beta +\epsilon_v,\quad\bX_v^e \indpt\epsilon_v, \quad\mathbb E[\epsilon_v]=0;\qquad\bX^e= \bW\cdot [\bX_v^e;\bX_s^e],
% \end{aligned}
% \end{equation}
% where $\beta\in\mathbb{R}^{d_v}$ and $\bW\in\mathbb{R}^{d\times(d_v+d_s)}$.  We assume that the $\bX_v$ component of $\bW$ is invertible, i.e., there exists $\tilde{\bW}\in\mathbb R^{d_v\times d}$ so that $\tilde{\bW}(\bW[\bx_v;\bx_s])=\bx_v$ for all $\bx_v$ and $\bx_s$. Both the feature extractor and predictor take a linear form, i.e., $\Phi$ takes values in $\mathbb R^{d\times d}$ and $\omega\in\mathbb R^{d}$. Our prediction for a given $\bX$ is then given by $\omega\circ\Phi(\bX)=(\Phi\bX)^T\omega$.

A major difficulty in this setting lies in how to characterize the effect of an invariant/spurious feature in a \emph{quantitive} way: the feature extractor $\Phi$ may extract an arbitrarily small portion of spurious information. Following \citep{Arjovsky2019Invariant,rosenfeld2020risks}, we consider a constrained form of Problem~\ref{eqn:main_formula} for theoretic analysis: 
\begin{align}
    \label{eqn:main_formula2}
    \min_{\omega, \Phi} \cR(\omega, \Phi),\quad\text{subject to}~\max_{ \rho, \{\omega_k \}}{\sum \nolimits_{k=1}^K\big[\cR_{\rho^{(k)}}(\omega, \Phi) - \cR_{\rho^{(k)}}(\omega_k, \Phi)\big]} = 0.
\end{align}
As in Conditions~\ref{cond:invariance} and \ref{cond:non-inv}, the auxiliary information should also be sufficiently informative so that the inferred environments can be diverse enough but also maintain the underlying invariance. This is in accordance to existing conditions for identifiability in the linear case \citep{Arjovsky2019Invariant,rosenfeld2020risks}. In this paper, we utilize such a condition, \emph{linear general position} condition, from \cite{Arjovsky2019Invariant}. For our analysis, we take squared error as our loss function and consider that  $\rho(\cdot)$ partitions the environments in a hard manner, i.e., each data sample would be assigned to exactly one environment. As in \Cref{sec:extenstion_other_loss}, we  use $L(\cdot|\cdot)$ to
represent the optimal expected risks. We also assume that the environments are non-degenerate, i.e., each inferred environment contains some data samples; otherwise, we can simply remove such an environment. Our identifiability result for the linear case then follows.

\begin{prop}
\label{thm:feature_learning_thm}
Assume Condition~1 where $H(\cdot|\cdot)$ is replaced with $L(\cdot|\cdot)$ accordingly. Suppose that there exists $\rho(\cdot)$ such that the generated environments, denoted as $\{\bX^k\}_{k=1}^K$, lie in linear general position of degree $r$, i.e., $K>d-r+d/r$ for some $r \in \mathbb N$ and for all non-zero $\bx \in \bbR^d$:
$
   \operatorname{dim} \left(\operatorname{span} \left(\left\{\bbE_{\bX^{k}} \left[\bX^k \bX^{k^\intercal}\right]\bx -\bbE_{\bX^{k},\epsilon_v} \left[\bX^k\epsilon_v \right]\right\}_{k}\right)\right) > d-r.
$
If $\Phi\in\mathbb R^{d\times d}$ has rank $r>0$,  then Problem~\ref{eqn:main_formula2} results in the desired invariant predictor.
\end{prop}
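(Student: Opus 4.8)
The plan is to adapt the structure of the original IRM identifiability argument (Theorem~9 in \cite{Arjovsky2019Invariant}) to our setting, where the multiple environments are not given but \emph{generated} by the worst-case partition function $\rho(\cdot)$. The key observation is that the constraint in Problem~\ref{eqn:main_formula2} being satisfied at $(\omega, \Phi)$ means: for \emph{every} admissible $\rho(\cdot)$, and in particular for the $\rho(\cdot)$ that puts $\{\bX^k\}_{k=1}^K$ in linear general position of degree $r$, the representation $\Phi$ together with $\omega$ is simultaneously optimal across those $K$ inferred environments. That is, $\omega\in\argmin_{\bar\omega} L(\cdot\,|\,\cdot)$-type first-order optimality holds in each inferred environment $k$. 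First I would write out this stationarity condition: for each $k$, $\bbE_{\bX^k}[\Phi^\intercal\Phi\bX^k\bX^{k\intercal}]\omega - \bbE_{\bX^k,\epsilon_v}[\Phi^\intercal\bX^k(\bX^{k\intercal}\Phi^\intercal\omega - Y^k)] = 0$ (the stationarity of the inner classifier $\omega_k$ coinciding with $\omega$). After expanding $Y^k = \bX_v^k\cdot\beta + \epsilon_v$ and using $\bX^k = \bW[\bX_v^k;\bX_s^k]$, this becomes a linear equation in the vector $\Phi^\intercal\omega \in \bbR^d$ with coefficient matrices $\bbE_{\bX^k}[\bX^k\bX^{k\intercal}]$ and offset $\bbE_{\bX^k,\epsilon_v}[\bX^k\epsilon_v]$.

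Next I would invoke the linear general position hypothesis. Since $\operatorname{rank}(\Phi) = r$, the vector $v := \Phi^\intercal\omega$ lies in the row space of $\Phi$, an $r$-dimensional subspace. Stacking the $K$ stationarity equations gives that $v$ is annihilated by the span of $\{\bbE_{\bX^k}[\bX^k\bX^{k\intercal}]v' - \bbE_{\bX^k,\epsilon_v}[\bX^k\epsilon_v]\}_k$ evaluated appropriately — precisely the algebraic setup in the linear general position definition. By the dimension-counting argument of \cite{Arjovsky2019Invariant} (the condition $K > d - r + d/r$ is exactly what makes the intersection of these constraints with the rank-$r$ row space of $\Phi$ force $v$ to lie in the span of the ``invariant directions''), one concludes that $\Phi^\intercal\omega$ must be supported only on the invariant coordinates after the change of basis by $\bW$; equivalently, $\omega\circ\Phi$ depends on $\bX$ only through $\tilde\bW\bX = \bX_v$. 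So the predictor extracts no spurious component.

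It then remains to verify that among predictors that use only $\bX_v$, the one minimizing $\cR(\omega,\Phi)$ (the outer objective) is the desired one $\bx\mapsto \bx_v\cdot\beta$. This is where Condition~\ref{cond:invariance} (with $L$ in place of $H$) enters: it guarantees that $L(Y|\bX_v,\rho(\bZ)) = L(Y|\bX_v)$, so restricting to $\bX_v$ the invariance penalty is genuinely zero for every $\rho$, hence such predictors are feasible; and the unconstrained minimizer of squared error over functions of $\bX_v$ is $\bbE[Y|\bX_v] = \bX_v\cdot\beta$ by the data generation model \eqref{eqn:data_generation_process_linear} and $\bbE[\epsilon_v]=0$, $\bX_v\perp\epsilon_v$. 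Combining the two halves — feasibility forces ``only $\bX_v$'' and optimality among those picks out $\beta$ — yields that Problem~\ref{eqn:main_formula2} returns the invariant predictor.

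\textbf{Main obstacle.} The delicate step is the second one: translating ``$\Phi$ has rank $r$ and the $\rho$-generated environments are in linear general position of degree $r$'' into ``$\Phi^\intercal\omega$ lies in the invariant subspace.'' In the original IRM proof the environments are fixed and external; here they are the image of a learned soft assignment $\rho(\bZ)$, and one must be careful that (i) the hard-partition reduction is legitimate (the paper already restricts to hard $\rho$ and non-degenerate environments, so each inferred environment is a genuine subpopulation with a well-defined second-moment matrix), and (ii) the existential quantifier in the hypothesis — ``there exists $\rho(\cdot)$ such that \dots'' — is enough, because the \emph{max} over $\rho$ in the constraint means the penalty vanishes for that particular good $\rho$ too, so we get to use its linear-general-position property for free. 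One should also confirm that the rank-$r$ restriction on $\Phi$ interacts correctly with the $d/r$ term in $K > d-r+d/r$; this is exactly the accounting in \cite{Arjovsky2019Invariant} and should transfer verbatim once the per-environment stationarity equations are in hand.
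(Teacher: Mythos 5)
Your proposal is correct and follows essentially the same route as the paper's proof: use the vanishing of the max-over-$\rho$ penalty to conclude that $\omega$ attains the per-environment minimum for the particular $\rho$ whose induced environments lie in linear general position, deduce the per-environment normal equations, invoke Theorem~9 of \citet{Arjovsky2019Invariant} to rule out any spurious component of $\Phi^\intercal\omega$, and then use Condition~\ref{cond:invariance} (with $L$ in place of $H$) to certify that the invariant predictor is feasible and ERM-optimal among feasible points. The points you flag as obstacles (hard non-degenerate partitions, and the existential quantifier over $\rho$ sufficing because the nonnegative penalty must vanish at that $\rho$ in particular) are handled exactly as you describe in the paper's argument.
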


\begin{proof}
\textbf{Step 1:} No spurious feature will be learned. Given a partition $\{\bX^k\}_{k=1}^K$ that lies in linear general position of degree $r$, \citet[Theorem~9]{Arjovsky2019Invariant} shows that $\Phi$ and $\omega$ satisfies the normal equations $\Phi\mathbb E_{\bX^k}[\bX^k \bX^{k^T}]\Phi^T\omega = \Phi \mathbb E_{\bX^k, Y^k}[\bX^kY^k]$
for all $k$ if and only if $\Phi$ elicits the desired invariant predictor $\Phi^T\omega=\tilde{\bW}^T\beta$. Thus, we only need to show our solution meets the same normal equations. Let $\Phi'$ and $\omega'$ denote a solution to Problem~\ref{eqn:main_formula2}. According to the constraint and the general linear position condition,
we know there exists a partition $\{\bX_k\}_{k=1}^K$ lies in general linear position of degree $r$, and we have $\mathcal R^k(\omega',\Phi')=\mathcal R^k(\omega_k,\Phi')$. Notice that $\omega_k$ minimizes the mean squared error on only the $k$-th environment, hence it must satisfy the normal equation $\Phi'\mathbb E_{\bX^k}[\bX^k \bX^{k^T}]\Phi'^T\omega_k = \Phi '\mathbb E_{\bX^k,Y^k}[\bX^kY^k]$. As $\omega'$ achieves the same minimum mean squared error on the $k$-th environment, 
$\omega'$ must satisfy the normal equation, too. Thus, no spurious information will be included and $\bX \Phi = [S \bX_v;\textbf{0}_{d_s}]$ where $S\in\mathbb{R}^{d_v\times d_v}$ is an invertible matrix and $\textbf{0}_{d_s}$ denotes a $d_s$-dim vector of all zeros.

\textbf{Step 2:} No invariant information will be discarded. Under Condition 1 where $H(\cdot|\cdot)$ is replaced with $L(\cdot|\cdot)$, we have $L(Y|X_v) - L(Y|X_v, \rho(Z)) = 0$. Then $\bX \Phi$ will satisfy the constraint of Problem~\ref{eqn:main_formula2}. Notice that $\bX \Phi$ achieves the smallest loss when only using invariant feature information.

Combining these two steps  completes the proof.
\end{proof}
%We can also replace the above assumption by  the non-degenerate conditions in Eq.~(9) and (10) of \cite{rosenfeld2020risks}, and a similar identifiability result follows.

\subsection{More Discussions on Assumption \ref{ass:spurious_cascade}}
\label{app:more_discussion_on_ass}
This assumption aims to ensure that the invariance penalty cannot be arbitrarily small if a spurious feature, together with other features, is selected by a feature mask. For example, in the extreme case where $\bX_2$ is the only invariant feature (i.e., $\bX_v$ consists of only $\bX_2$) and can perfectly predict $Y$, we would have $H(Y|\bX_2)=0$. Then for a spurious feature $\bX_1$, $H(Y|\bX_1, \bX_2)=0$ and $H(Y|\bX_1, \bX_2, \rho(Z))=0$ for any $\rho(\cdot)$, and we cannot identify $\bX_1$ as the spurious feature. Nevertheless, since there are generally exogenous noise variables in the SCM and $H(Y|\bX_v)$ is positive, we believe that this assumption holds in most cases.

%when $\bX_2$ is too informative of $Y$. For ease of presentation,  suppose that $\bX_2$ is the parent of  $Y$ and $\bX_1$ is a child of $Y$. Assume that $\bX_2$ is not fully informative of $Y$, i.e., $H(Y|\bX_2)>0$. If $H(Y|\bX_1) - H(Y|\bX_1, \rho(Z))>0$, then we typically have $H(Y|\bX_1, \bX_2) - H(Y|\bX_1, \bX_2, \rho(Z))>0$. The following case are troublesome and are ruled out by Assumption \ref{ass:spurious_cascade}. 

%If $\bX_2$ is informative of $Y$, i.e., $H(Y|\bX_2) =10^{-6}$, then it will lead to $$H(Y|\bX_1, \bX_2) - H(Y|\bX_1, \bX_2, \rho(Z)) \leq H(Y|\bX_1, \bX_2)  \leq H(Y|\bX_2) = 10^{-6} .$$
%In this case, the penalty will be less than $10^{-6}$ no matter what the spurious feature~$\bX_1$~is.

% If  is highly informative of , i.e., , then  and . It leads to the penalty . Then the penalty is too small. Assumption 3 excludes the above cases where  is too informative and the penalty vanishes.

\section{Proofs}\label{app:proof}
%However, if we consider another data generation process, the results are quite different. Here is another example:
% \begin{example}
%   Consider the random effect model where the coefficients of the linear model may vary across environments.
%   \begin{align*}
%       Y = \alpha_1^e X_1 + \alpha_2^e X_2
%   \end{align*}
%   where 
%   \begin{align*}
%       X_1 = \begin{cases}
%          1, \mbox{ w.p. } 0.5\\
%          0, \mbox{ w.p. } 0.5
%       \end{cases}
%   \end{align*}
%   and 
%   \begin{align*}
%       X_2 = \begin{cases}
%          1, \mbox{ w.p. } 0.5\\
%          0, \mbox{ w.p. } 0.5
%       \end{cases}
%   \end{align*}
%   We consider the following two cases:
%   \begin{itemize}
%       \item[(a)] $\alpha_1^e$ has random effect: $\alpha_1^1=1$ and $\alpha_1^2=3$; $\alpha_2^e$ does not have random effect $\alpha_2^1=2$ and $\alpha_2^2=2$.
%       \item[(b)] $\alpha_1^e$ does not have random effect : $\alpha_1^1=2$ and $\alpha_1^2=2$;  $\alpha_2^e$ has random effect $\alpha_2^1=1$ and  $\alpha_2^2=3$.
%   \end{itemize}
%   The joint distribution generated by this two cases are different.
% \end{example}
% \begin{proof}
% Just consider $X_1=1$ and $X_2=2$. In the first case $Y \in \{1, 3\}$ and in the second case $Y = 2$.
% \end{proof}

%%%%%%%%%%%%%%%%%%%%%%%%%%%%%%%%%%%%%%%%%%%%%%%%%%%%%%%%%%%%%%%%%%%%%%%%%%%%%%%
%%%%%%%%%%%%%%%%%%%%%%%%%%%%%%%%%%%%%%%%%%%%%%%%%%%%%%%%%%%%%%%%%%%%%%%%%%%%%%%

\subsection{Proof of Theorem \ref{thm:impossibility}}
\label{sec:proof_impossibility}
\begin{proof}
First, we assume $Y$ and $\bX_v$ are univariate variables, i.e., $Y, \bX_v\in\mathbb R$. Let $\eta_1\sim\mathrm{Uniform}(0,1)$ independent of $\bX_s$ and $Y$, and set $\bX_v'=\bX_s$. Define  the conditional cumulative distribution function and its inverse as:
\begin{align*}
    &F_{Y\mid \bX_s=\bx_s}(y) = P(Y\leq y\mid \bX_s=\bx_s),  \\
     &Y'=f_1'(\bX_v',\eta_1) =F^{-1}_{Y\mid \bX_s}(\eta_1)=\inf\{y\in\mathbb R:F_{Y\mid \bX_s}(y)\geq \eta_1\}~\text{with}~\bX_s=\bX_v'. 
\end{align*}

By definition, we would have
\begin{equation}
\label{eqn:nonunique}
\begin{aligned}
P(Y'\leq y\mid \bX_v') 
&=P(f_1'(\bX_v',\eta_1)\leq y)\\
&=P (F^{-1}_{Y\mid \bX_s}(\eta_1)\leq y) \\
&=P(F_{Y\mid \bX_s}\circ F^{-1}_{Y\mid \bX_s}(\eta_1)\leq F_{Y\mid \bX_s}(y)) \\
& = P(\eta_1\leq F_{Y\mid \bX_s}(y)) \\
& = P(Y\leq y \mid \bX_s).
\end{aligned}
\end{equation}

Similarly,  we can construct $\bX_s'=f'_2(\bX_v', Y', \eta_2)=F^{-1}_{\bX_v|Y,\bX_s}(\eta_2)$ so that $P(\bX_s'\mid Y',\bX_v')=P(\bX_v\mid Y,\bX_s)$ with $\eta_2\sim \mathrm{Uniform}(0,1)$. Thus, we have 
\begin{align}
    P(\bX_v',\bX_s', Y)&= P(\bX_s'\mid Y',\bX_v')P(Y'\mid \bX_v')P(\bX_v') \nonumber\\
    &= P(\bX_v\mid Y,\bX_s)P(Y\mid \bX_s)P(\bX_s)\nonumber\\
    \label{eqn:xy}
    &=P(\bX_v,\bX_s, Y).
\end{align}

Next, we can easily find a function $q'(\cdot)$ so that $\bX'=q'(\bX_v', \bX_s')=q(\bX_v,\bX_s)$, where we have chosen $\bX_v'=\bX_s$ and $\bX_s'=\bX_v$. Together with \Cref{eqn:xy}, we conclude that $P(X',Y')=P(X,Y)$.

% Next, we show that $(X', Y')$ of this construction the same distribution $p(X,Y)$. We have 
% \begin{align}
%     p(X',Y') &= \int_{Z'_1,Z'_2} p(X', Y', Z_1', Z_2') \nonumber\\
%              &= \int_{Z'_1,Z'_2} p(X'|Y', Z_1', Z_2') p(Z_2'|Y', Z_1') P(Y'|Z_1')P(Z_1') \nonumber \\
%              \label{eq:non-unique1}
%             & = \int_{Z_2, Z_1} p(X'|Z_1', Z_2')  p(Z_2'|Y', Z_1') P(Y'|Z_1')P(Z_1') \\
%              \label{eq:non-unique2}
%              & = \int_{Z_2, Z_1} p(X|Z_2, Z_1) p(Z_1|, Y,Z_2) p(Y|Z_2)P(Z_2) \\
%               & = \int_{Z_2, Z_1} p(X|Z_2, Z_1, Y) P(Z_1|, Y,Z_2) p(Y|Z_2)p(Z_2)  \\
%               &=P(X,Y)
% \end{align}
%where Eqn.~(\ref{eq:non-unique1}) is because the data generation process $Y'=f'(Z_1', \epsilon_1')$ and $X'=g'(Z_1', Z_2', \epsilon_2')$. Eqn.~(\ref{eq:non-unique2}) is because $Z_1'=Z_2$, $Z_2'=Z_1$, and Eqn.~(\ref{eq:non-unique0}). This completes the proof.
Second, we consider that $\bX_v$ has a multi-dimension. We may leave $Y$ to be a univariate variable as the label in many ML problems is scalar-valued. For this case, we can pick an entry of $\bX_v$, say, $\bX_{v}^{(j)}$ for some $j$. Then we set $\bX_v'=(\bX_v^{(-j)}, \bX_s)$ where $\bX_v^{(-j)}$ denotes the rest entries of $\bX_v$ except the $j$-th. Then we can similarly use the inverse conditional cumulative distribution function and an independent uniformly distributed noise variable $\eta_1$ to construct $Y'=f'(\bX_v', \eta_1)$ so that $P(Y'\leq y\mid \bX_v')=P(Y\leq y\mid \bX_v^{(-j)}, \bX_s)$. Similarly, we set $\bX_s'=\bX_v^{(j)}$ and we can construct $\bX_s'=f_2'(\bX_v', Y', \eta_2)$ so that  $P(\bX_s'|Y',\bX_v')=P(\bX_v^{(j)}\mid Y,\bX_s)$. We can then get the same conclusion following the previous proof.
%\textcolor{blue}{Third: it seems that we can construct another data generation process so that $(Z_1, Z_2)$ together be an invariant feature.} That is, in the first part, we pick $Z_1'=(Z_1,Z_2)$. Thus the data generation process would give $Y=f'(Z_1,Z_2,\zeta)$.
\end{proof}
\subsection{Proof of Theorem \ref{thm:identifibility}}
\label{app:thm_proof:identifibility}
\begin{proof}
Our proof proceeds by two steps. First, we show that any feature mask that selects at least one spurious feature would induce a penalty. With sufficiently large $\lambda$, the penalty will dominate the expected risk and then exceed $\hat \cL(\Phi_{v})$. Second, we show that any proper subset of the invariant features induces a loss larger than $\hat \cL(\Phi_{v})$.

\paragraph{Step 1} Suppose that the feature mask contains at least one spurious features. Denote the selected features as $\bX_{+s}$ and the corresponding feature mask as $\Phi_{+s}$. We aim to show that 
\begin{align*}
    \hat \cL_{(\Phi_{+s}}) > \hat \cL(\Phi_{v}).
\end{align*}
By \Cref{ass:dense_function_space} with a given $\epsilon>0$,  we have
\begin{align}
\label{eqn:step11}
    \hat \cL(\Phi_{v}) &\leq (1+2\lambda)\epsilon + H(Y|\bX_{v})  + \lambda\left(H(Y|\bX_{v}) - {H}(Y|\bX_{v}, \rho(Z))\right)\nonumber \\
    & = (1+2\lambda)\epsilon + H(Y|\bX_{v})\nonumber \\
    & \leq (1+2\lambda)\epsilon + H(Y).
\end{align}
One the other hand, we have
\begin{align}
\label{eqn:step12}
    \cL(\Phi_{+s}) & \geq -(1+2\lambda)\epsilon +  H(Y|\bX_{+s})  + \lambda \left(H(Y|\bX_{ +s}) - {H}(Y|\bX_{+s}, \rho(Z))\right) \nonumber\\
    & \geq  -(1+2\lambda)\epsilon + \lambda \left(H(Y|\bX_{+s}) - {H}(Y|\bX_{+s}, \rho(\bZ))\right) \nonumber\\
    & \geq  -(1+2\lambda)\epsilon + \lambda\delta C,
\end{align}
where the last inequality is due to \Cref{ass:spurious_cascade} and \Cref{cond:non-inv}.
Thus, if we choose $\epsilon < \delta C/4$ and $\lambda > \frac{H(Y) +2\epsilon}{\delta C -4\epsilon}$, we can get
\[\cL(\Phi_{ v}) < \cL(\Phi_{+s}).\]

\paragraph{Step 2} Denote a proper subset of invariant features as $\bX_{-v}\subsetneq \bX_v$, and similarly the feature mask as $\Phi_{-v}$.

In Step 1, we have shown that
\begin{align*}
    \hat \cL(\Phi_v) \leq  (1+2\lambda) \epsilon + H(Y|\bX_v).
\end{align*}
Similar to \Cref{eqn:step12}, we have 
\begin{align*}
    \hat \cL(\Phi_{-v}) \geq  -(1+2\lambda) \epsilon + H(Y|\bX_{-v}).
\end{align*}
Then according to \Cref{ass:unexplained_conditional}, we have
\begin{align*}
     \hat \cL(\Phi_{-v}) - \hat \cL(\Phi_{v}) 
     \geq & -2(1+2\lambda)\epsilon+H(y|\bX_{-v}) - H(y|\bX_v) \\
     \geq & -2(1+2\lambda)\epsilon + \gamma.
\end{align*}
Thus, if $\epsilon < \frac{\gamma}{2(1+2\lambda)}$, we have 
\begin{align*}
    \hat \cL(\Phi_{-v}) > \hat \cL(\Phi_{v}).
\end{align*}
In conclusion, with $\lambda \in [ \frac{H(Y) +1/2\delta C}{\delta^d C -4\epsilon} - \frac{1}{2}, \frac{\gamma}{4\epsilon} - \frac{1}{2}]$, we can get
\[\hat \cL(\Phi_{v}) < \hat \cL(\Phi),\quad \forall \Phi \neq \Phi_{v}.\]
Notably, there exists a feasible $\lambda$ if $\epsilon < \frac{C\gamma\delta}{4\gamma + 2C \delta H(Y)}$. The proof is complete by noticing that $\epsilon$ can be chosen arbitrarily according to \Cref{ass:dense_function_space}.
\end{proof}
\subsection{Proof of Proposition~\ref{prop:violation_of_inv}}
\label{sec:proof_prop1}
\begin{proof}
Consider the following feature set
\begin{align*}
   \bX_{\bar v} := \max_{|\bX'|} \{\bX' \subset \bX: H(Y|\bX, \rho(\bZ)) = H(Y|\bX)\quad \forall \rho(\cdot)\},
\end{align*}
and the corresponding feature mask is denoted as $\Phi_{\bar v}$. It corresponds to the largest subset of $\bX$ that satisfies the invariance constraint.

Note that  $\Phi_{\bar v} \neq \Phi_{v}$.
By Assumption~\ref{ass:dense_function_space}, for a given $\epsilon$ we can get
\begin{align*}
    \hat \cL(\Phi_{\bar v}) &\leq (1+2\lambda)\epsilon + H(Y|\bX_{\bar v})  + \lambda(H(Y|\bX_{\bar v}) - {H}(Y|\bX_{v}, \rho(\bZ))) \\
    & = (1+2\lambda)\epsilon + H(Y|\bX_{{\bar v}}) \\
    & \leq (1+2\lambda)\epsilon + H(Y) .
\end{align*}
One the other hand, we have
\begin{align*}
    \cL(\Phi_{v}) & \geq -(1+2\lambda)\epsilon +  H(Y|\bX_{v})  + \lambda \left(H(Y|\bX_{v}) - {H}(Y|\bX_{v}, \rho(\bZ))\right) \nonumber\\
    & \geq  -(1+2\lambda)\epsilon  + \lambda \left(H(Y|\bX_{v}) - {H}(Y|\bX_{v}, \rho(\bZ))\right) \\
    % & \geq  -(1+2\lambda)\epsilon  + \lambda \delta \left(H(Y|\bX_v^i) - {H}(Y|\bX_v^i, \rho(\bZ))\right) \\
    & \geq  -(1+2\lambda)\epsilon + \lambda\delta C',
\end{align*}
which can be shown similarly to \Cref{eqn:step12}. 
Thus, if we choose $\epsilon < \delta C'/4$ and $\lambda > \frac{H(Y) +2\epsilon}{\delta C' -4\epsilon}$, we would get
\[\cL(\Phi_{\bar v}) < \cL(\Phi_{v}).\]
\end{proof}
\subsection{Proof of Corollary \ref{col:invalid choice of z}}
\label{app:proof_of_invalid choice of z}
\begin{proof}
(a) Since $h$ is injective, ${H}(Y|\bX_v, h(Y)) = H(Y|\bX_v, Y) = 0$ for any $\bX_v$. By \Cref{ass:unexplained_conditional}, we  have $H(Y|\bX_v) \geq H(Y|\bX) + \gamma \geq \gamma$. Then \Cref{prop:violation_of_inv} indicates that we cannot identify all the invariant features. (b)~The proof proceeds the same as above by noting $H(Y|\bX_v, h(\bX, Y))=0$. (c)~This case can be shown similarly to (a), because $H(Y|X_v, h(\mathrm{Index}(\bX, Y)))= H(Y|X_v, \bX, Y)=0$.
\end{proof} 
\subsection{Proof of Proposition \ref{prop:violation_of_sp}}
\label{sec:proof_prop2}
\begin{proof}
Denote a feature set $\bX_{v(+k)}$, which contains the invariant feature set $\bX_v$ as well a spurious feature $\bX_s^k$ in Proposition~\ref{prop:violation_of_sp}.

Similar to Eq.~\eqref{eqn:step11}, we can show that 
\begin{align*}
    \hat \cL(\Phi_{v(+k)}) \leq  (1+2\lambda) \epsilon + H(Y|\bX_{v(+k)}).
\end{align*}
And similar to Eq.~\eqref{eqn:step12}, we also have
\begin{align*}
    \hat \cL(\Phi_{v}) \geq  -(1+2\lambda) \epsilon + H(Y|\bX_{v}).
\end{align*}
Then it follows that
\begin{align*}
 \hat \cL(\Phi_{v}) - \hat \cL(\Phi_{v{(+k)}}) 
     \geq & -2(1+2\lambda)\epsilon+H(Y|\bX_{v}) - H(y|\bX_{v(+k)}) \\
     \geq & -2(1+2\lambda)\epsilon + \gamma.
\end{align*}
If $\epsilon < \frac{\gamma}{2(1+2\lambda)}$, we have 
\begin{align*}
    \hat \cL(\Phi_{v}) > \hat \cL(\Phi_{v{(+k)}}).
\end{align*}
Thus, we cannot identify all  the invariant features from Problem~\ref{eqn:main_formula}.
\end{proof}

\subsection{Proof of Meeting Condition \ref{cond:invariance}}
\label{app:proof_condition1_equvlence}
We show that $H(Y|\bX_{v}, \rho(\bZ)) =   H(Y|\bX_{v})$ for all $\rho(\cdot)$ if $H(Y|\bX_{v}, \bZ) =   H(Y|\bX_{v})$ holds.
\begin{proof}
 On one hand, because $\rho(\bZ)$ contains less information than $\bZ$, we have
\[H(Y|\bX_{v}, \rho(\bZ)) \geq H(Y|\bX_{v}, \bZ) = H(Y|\bX_{v}). \]
 On the other hand, $\bX_{v}$ and $\rho(\bZ)$ contain more information than $\bX_{v}$, so we can get
  \[H(Y|\bX_{v}, \rho(\bZ)) \leq  H(Y|\bX_{v}). \]
Thus, we conclude $H(Y|\bX_{v}, \rho(\bZ)) = H(Y|\bX_{v})$.
 \end{proof}

\section{More Experimental Details}
\label{sec:moredetails}
% \textbf{More Details on MCOLOR}.  We use the same notations where  $X_1$ stands for the digit shape, $X_2$ is the color and $Y$ denotes label. We generate the label in MCOLOR from the color and make the digit shape spuriously correlated with the label: 
%   \begin{align*}
%    X_2&\sim \text{Bernoulli(0.5)}, \quad Y =  \left\{\hspace{-4pt}\begin{array}{lr}
%        X_2, & {w.p.}~0.85,\\
%        1 - X_2, & {w.p.}~0.15,\\
%     \end{array}\right. \quad
%     X_1^e =  \left\{\hspace{-4pt}\begin{array}{ll}
%        Y, & w.p.~p_s^e,\\
%        1 - Y, &w.p.~1-p_s^e,\\
%     \end{array}\right.       
%     \end{align*}
% where $X_2$  (color) is the invariant feature while $X_1$ (digit shape) is spurious, and $p_s^{e=1}=0.8$ and $p_s^{e=2}=0.7$. In the testing domain, we  set $p^{e=3}_s=0.1$ to simulate the distributional shift so that $P(Y|X_1)$ is significantly different from the training domains. For both CMNIST and MCOLOR, we will observe the following same  distribution if we mix the training domains:
% \begin{align*}
% \label{eqn:counterexample}
% %\small
%   \begin{cases}
%   Y=0,~~w.p.~0.5,\\
%    Y=1,~~w.p.~0.5,\\
%     X_2= X_1 = Y,~~w.p.~0.6375, \\
%     X_2 \neq X_1 = Y,~~w.p.~0.1125, \\
%      X_1 \neq  X_2 = Y,~~w.p.~0.2125, \\
%      X_2 = X_1 \neq Y,~~w.p.~0.0375.
%   \end{cases}
% \end{align*}

\textbf{Implementing Approximated ZIN}.~~Since Eq.~\eqref{eqn:main_formula_irm} is a challenging minimax formulation, we replace the penalty term given a environment partition with its first order approximation. Specifically, we consider the following surrogate minimax formulation:
\begin{align}
    \label{eqn:first_order_app}
    \min_{\omega, \Phi} \max_{\rho }~= \cR(\omega, \Phi)+\lambda\sum \nolimits_{k=1}^K \|\nabla_{\omega}\cR_{\rho^{(k)}}(\omega, \Phi) \|^2.
\end{align}
Readers can refer to the Appendix B.3 of \cite{Zhou2022SparseIRM} for more details about the relationship between Eqs.~\eqref{eqn:main_formula_irm} and \eqref{eqn:first_order_app}. 

We further use a two-stage method to approximate the minimax procedure:
\begin{itemize}
    \item Minimize $\cR(\omega, \Phi)$ over $(\omega, \Phi)$ and simultaneously maximize $\sum \nolimits_{k=1}^K \|\nabla_{\omega}\cR_{\rho^{(k)}}(\omega, \Phi) \|^2$ over $\rho$. 
    \item Fix $\rho$ and minimize Eq. \eqref{eqn:first_order_app} over $(\omega, \Phi)$. 
\end{itemize}
\smallskip
\textbf{Training Details}.~~For the synthetic datasets and the house price prediction dataset, we use the full batch gradient in optimization. The ERM method is trained for 4000 epochs. The IRM/ZIN method is also trained for 4000 epochs, with additional annealing in the first 2000 epochs. We train EIIL for 4000 epochs which is divided equally, i.e., 2000 epochs, in each of the two stages. For the CelebA classification task, we use mini-batch training with batch size of 128. All the methods are trained for 50 epochs, with annealing strategy in the first 25 epochs. For the Landcover  task, we use mini-batch training with batch size of 1024, and all the methods are trained for 400 epochs with annealing strategy in the first 40 epochs.  We use  Adam \citep{kingma2014adam}  with learning rate 0.001 as our optimizer. Our experiments are run in a Linux workstation with Intel Xeon 3.20GHz CPU, 128GB RAM, and Nvidia GTX 3090 GPU. It takes about 5  GPU hours for the CelebA tasks, while the house price task and Landcover task cost less than 10 minutes for training. Notably, since IRM suffers from  overfitting problem when applied to large models like ResNet-18 \cite{Lin2020BIRM, Zhou2022SparseIRM}, we fix the feature extraction backbone (that is, use a pre-trained model) in the CelebA experiment to allieviate this issue.

\section{More Experiment Results}
\label{sec:moreexpresults}
\subsection{Spatial Heterogeneity}
\label{sec:spatial}
We also consider spatial heterogeneity that is also commonly encountered in practice, e.g., environments may be divided according to locations of latitude and longitude. We simulate spatial heterogeneity in the same way the data generation process for time heterogeneity but use a two-dimensional spatial variable  $\br = [r_1, r_2] \in [0, 1]^2$. We simulate four environments for training by equally splitting the  space into four blocks, i.e.,  $\{[0, 0.5) \times [0, 0.5), [0, 0.5) \times [0.5, 1], [0.5, 1] \times [0, 0.5), [0.5, 1] \times [0.5, 1]\}$. Similarly, we denote a simulated case by tuple of $p_s(\br)$ in the four elements. We also evaluate the performance on four distinct test environments with $p_s \in \{0.999, 0.8, 0.2, 0.1\}$ and $p_v$ being constant. More details regarding the implementations are given in \Cref{sec:moredetails}.

The experimental results of spatial heterogeneity in Table~\ref{tab:syn_spatial} show similar performances to those  of temporal heterogeneity. An interesting observation is that ZIN outperforms IRM with ground-truth environments in this simulation, especially when ``duplicated'' environments exist, e.g., when $p_s(\br)=(0.999,0.999,0.7,0.7)$. We conjecture that in this case the ``ground-truth''  partition may not be the most effective for invariant learning due to the heavily overlapped environments. As shown in the right panel of Figure~\ref{fig:inferred_envs}, ZIN automatically merges duplicated environments.

\begin{table*}[t]
\caption{Test Mean and Worst accuracy (\%) on four spatial heterogeneity synthetic datasets.}
\vspace{-0.5em}
%\caption{Test Accuracy on the spatial synthetic dataset. $p_v$, the correlation between the invariant feature $\bX_v$ and label $Y$,  remains unchanged in the training time domain $\bs \in [0, 1]^2$. $p_s(\bs)$ is the correlation between the spurious feature $\bX_s$ and label $Y$. ``0.999,0.9,0.8,0.7'' stands for $p_s(\bs \in [0, 0.5] \times [0, 0.5])=0.999$, $p_s(\bs \in (0.5, 1] \times [0, 0.5])=0.9$, $p_s(\bZ \in [0, 0.5] \times (0.5, 1])=0.8$ and $p_s(\bs \in (0.5, 1] \times (0.5, 1])=0.8$ The testing performance is evaluated on four discrete environments where $p_s(\bZ) \in \{ 0.999, 0.8, 0.2, 0.1\}$ where $p_v$ remains the same with that in the training environment. The accuracy  on each testing environment is calculated. We report the averaged accuracy and worst accuracy of four environments in the column ``Mean" and ``Worst", respectively.}
\small
\begin{center}
\resizebox{\linewidth}{!}{
\begin{tabular}{c|c|c|c|c|c|c|c|c|c|c|c|c|c}
\toprule
 \multirow{3}{*}{Env Partition} & $p_s(t)$ & \multicolumn{4}{|c}{0.999, 0.999, 0.7, 0.7} & \multicolumn{4}{|c}{0.999, 0.9, 0.8, 0.7} & \multicolumn{4}{|c}{0.999, 0.999, 0.8, 0.8}  \\
\cline{2-14}
~& $p_v$ & \multicolumn{2}{|c}{0.9}& \multicolumn{2}{|c}{0.8} & \multicolumn{2}{|c}{0.9} & \multicolumn{2}{|c}{0.8} &\multicolumn{2}{|c}{0.9} & \multicolumn{2}{|c}{0.8} \\
\cline{2-14}
~&Test Acc & Mean & Worst & Mean & Worst  & Mean & Worst & Mean & Worst &Mean & Worst & Mean & Worst  \\
\midrule
 \multirow{3}{*}{No}& ERM & 76.65& 59.48 & 60.33 & 27.25 & 76.59 & 59.35 & 60.30 & 27.25& 69.93 & 44.65& 56.23 & 16.60 \\
~&  EIIL & 37.81& 16.89 & 66.46 & 50.35 & 37.03 & 14.01 & 66.60 & 50.28 &70.18&45.23&71.00&58.72 \\
~& HRM & 49.98&49.95  & 49.97 & 49.92 & 49.99 & 49.98 & 50.00 & 49.99 &49.97 & 49.95&49.99& 49.97 \\
~&  ZIN & \textbf{88.66}& \textbf{87.23} & \textbf{79.16} & \textbf{78.04} & \textbf{88.28} & \textbf{86.29} & \textbf{78.92} & \textbf{77.49} &\textbf{88.00} &\textbf{85.75}&\textbf{78.80}&\textbf{77.25} \\
   \midrule
Yes&  IRM & 83.71& 82.24 & 73.26 & 71.25 & 86.73 & 83.79 & 75.80 & 73.33 &84.39&81.48&73.15&69.97\\
\bottomrule
\end{tabular}
}
\end{center}
\label{tab:syn_spatial}
\end{table*}
\begin{figure}[h]
\vspace{2em}
    \centering
    \includegraphics[width=0.6\textwidth]{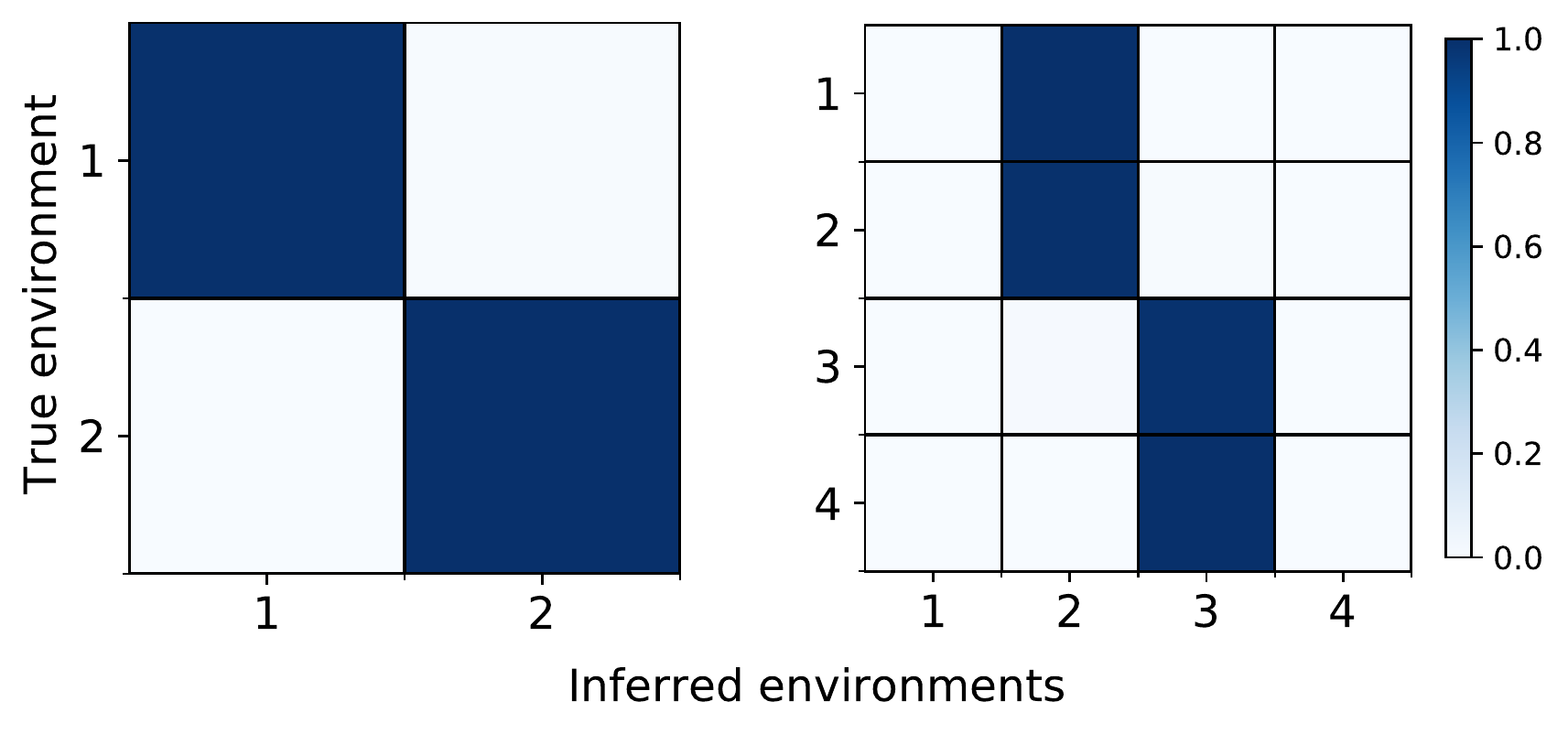}
    \caption{Visualization of inferred environments. {\bf Left}: temporal heterogeneity setting  $(0.999,0.7)$. {\bf Right}: spatial heterogeneity setting  $(0.999,0.999,0.7,0.7)$.}
    \label{fig:inferred_envs}
\end{figure}

\subsection{Ablation Study on Auxiliary Information and Number of Environments}
\label{sec:ablationZK}
We now verify the theoretical results in Section~\ref{sec:ZIN} by choosing different inputs as $\bZ$. We adopt a setting  of spatial heterogeneity with $p_s(\br)=(0.999,0.999,0.8,0.8)$ and $p_v=0.8$. Note that the heterogeneity in this setting is only along the second dimension $r_2$. The results are shown in Table~\ref{tab:ablation_theory}. One can verify that Conditions~\ref{cond:invariance}
 and \ref{cond:non-inv} are satisfied when $\br$ or $r_2$ is chosen as $\bZ$. The mean and worst test accuracy implies that ZIN based on $\br$ or $r_2$ can effectively remove the spurious feature. Since $p_s(\br)$ does not change along $r_1$, choosing $r_1$ to infer environment partition violates Condition~\ref{cond:non-inv}, which is reflected by the poor performance of ZIN with $r_1$. Using $[\bX, Y]$ as input to $\rho(\cdot)$ is also a violation by  Corollary~\ref{col:invalid choice of z}, and the corresponding results in Table~\ref{tab:ablation_theory}  confirm our analysis. Lastly, notice that $X_s\in\bX$ contains some information of $Y$ and there may exist  a function $\rho(\cdot)$ so that  $H(Y|X_v) > H(Y|X_v, \rho(\bX))$. This violates Condition~\ref{cond:invariance} and  leads to poor results when choosing $\bX$ as input to $\rho(\cdot)$.
 \begin{table}[h]
    \centering
    % \resizebox{0.5\linewidth}{!}{
    \caption{Ablation study on choice of $\bZ$.}
        \label{tab:ablation_z}
        \centering
        {
        \resizebox{0.6\linewidth}{!}{
        \begin{tabular}[t]{c|c|c|c|c}
        \toprule
            $\bZ$ & Condition~\ref{cond:invariance} & Condition\ref{cond:non-inv} & Test Mean& Test Worst \\
            \midrule 
            $\br$ & \cmark & \cmark& 78.80 & 77.25\\
            $r_1$ & \cmark & \xmark & 56.30& 16.84\\
            $r_2$ & \cmark & \cmark & 78.79 & 77.21\\
            $\bX$ & \xmark & \cmark & 59.85 & 25.99\\
            $\bX, Y$ & \xmark & \cmark & 71.09 & 58.85 \\
            % $\bs, \bX, Y$  & No & Yes \\
             \bottomrule
        \end{tabular}
        }
        }
    \label{tab:ablation_theory}
\end{table}

 We also empirically verify the choice of hyper-parameter $K$ using the same simulated setting. Fig.~\ref{fig:differentK} shows that $K$ has a relatively small impact, especially when $K\geq 4$.

% centering
%      \begin{subfigure}[b]{0.3\textwidth}
%          \centering
%          \includegraphics[width=\textwidth]{graph1}
%          \caption{$y=x$}
%          \label{fig:y equals x}
%      \end{subfigure}

\begin{figure}[H]
\centering
  \includegraphics[height=0.3\textwidth]{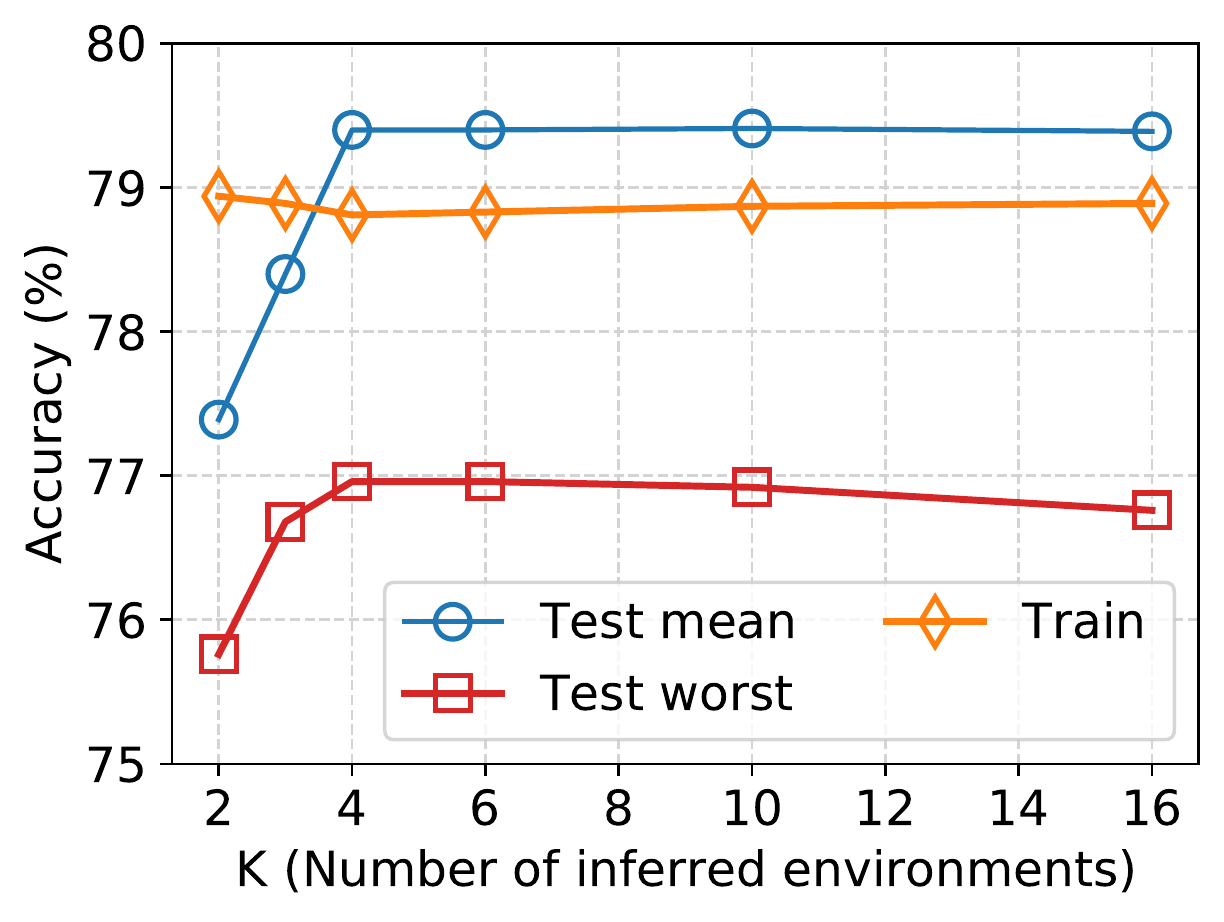}
  \hspace{2em}
  \includegraphics[height=0.3\textwidth]{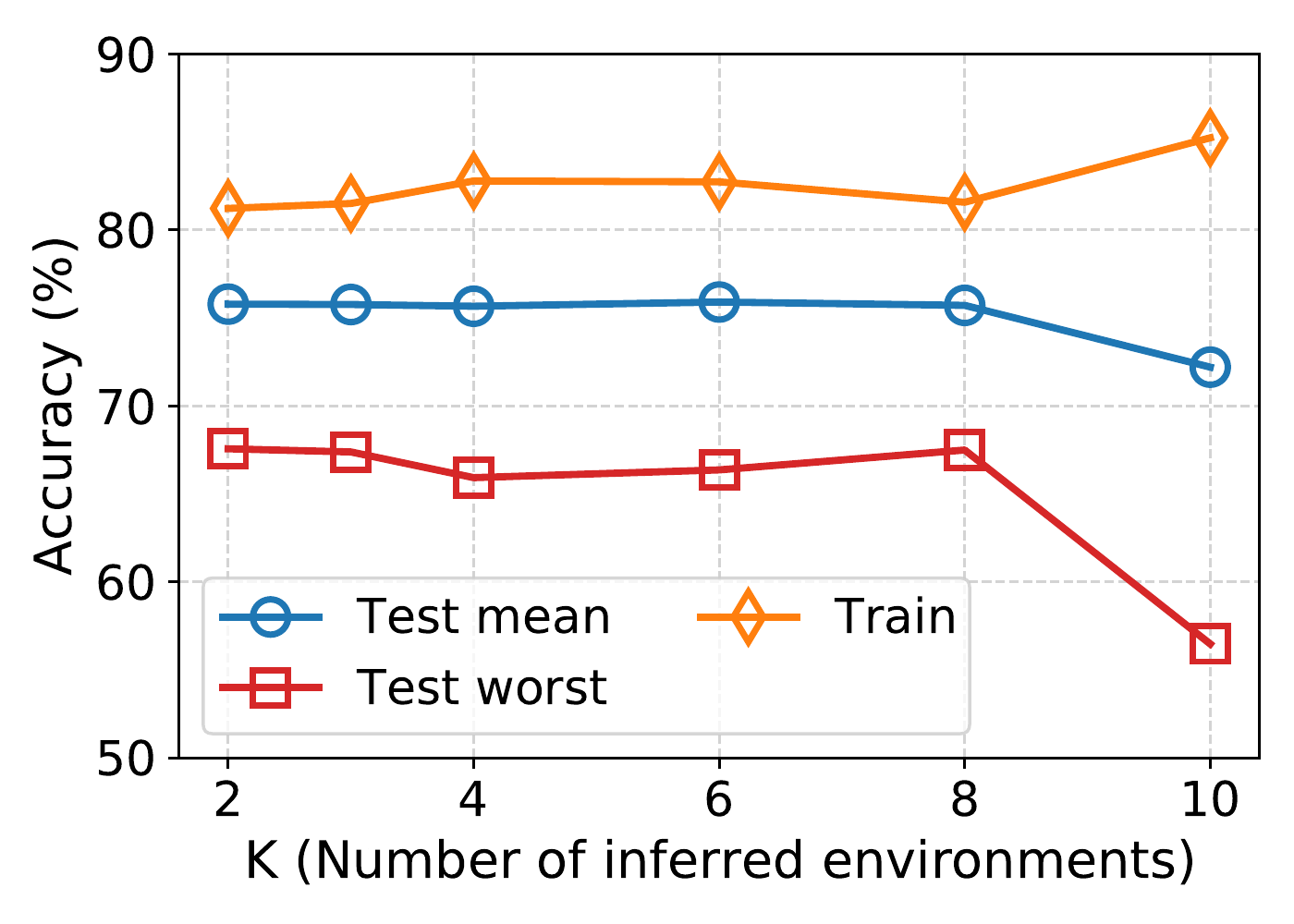}
  \caption{Ablation study on the choice of $K$. \textbf{Left}: synthetic dataset. \textbf{Right}: CelebA.}
  \label{fig:differentK}
\end{figure}

{
\subsection{About the Inferred Environments on CelebA}
\label{sec:inferenvCelebA}
We visualize the inferred environments of the CelebA dataset in this section. The target \textit{Smiling} is spuriously correlated with feature \textit{Gender}, i.e., most females are smiling while most males are not. We set  $K$ to be 2 (our framework is insensitive to $K$ as shown in the right panel of Fig.~\ref{fig:differentK}). We   visualize the spurious correlations in the two inferred environments during training in Fig.~\ref{fig:visualization_of_envs_celeba}. ZIN can generate two environments where the spurious correlations differ. Then we can easily discard the spurious feature using IRM methods on the inferred environments.  
\begin{figure}[H]
    \centering
    \includegraphics[scale=0.35]{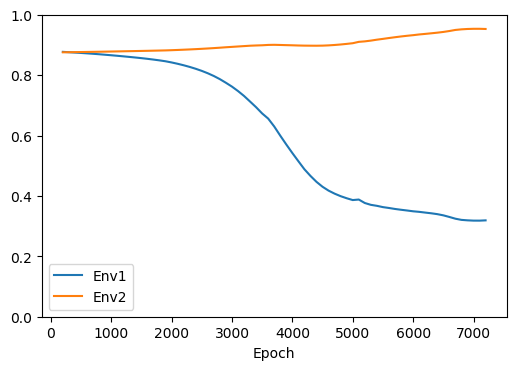}
    \caption{
    Correlation of the spurious feature (Gender) with the target (Smiling). The spurious correlation is calculated as the percentage of samples whose target (Smiling/Not Smiling) aligns with its gender (Female/Male). For example, a smiling female or a non-smiling male is counted as score 1, otherwise as score 0. The average score represents the correlation between Smiling and Gender.}
    \label{fig:visualization_of_envs_celeba}
\end{figure}
}

{
\section{Examples of Valid and Invalid Choices of Auxiliary Information}
\label{app:chooseZ}
% Conditions \ref{cond:invariance} and \ref{cond:non-inv} give the theoretical requirements of $Z$ to successfully learn invariant features. In this section, we  discuss how to find such auxiliary information in applications.

\begin{figure}[h]
    \centering
    \includegraphics[scale=0.3]{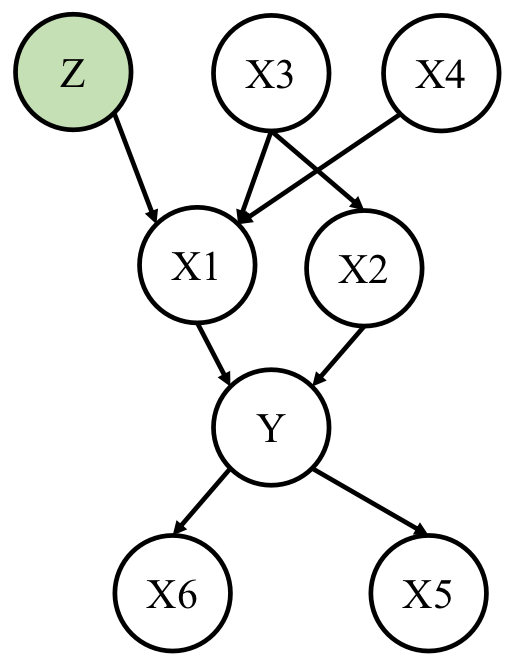}
    \includegraphics[scale=0.3]{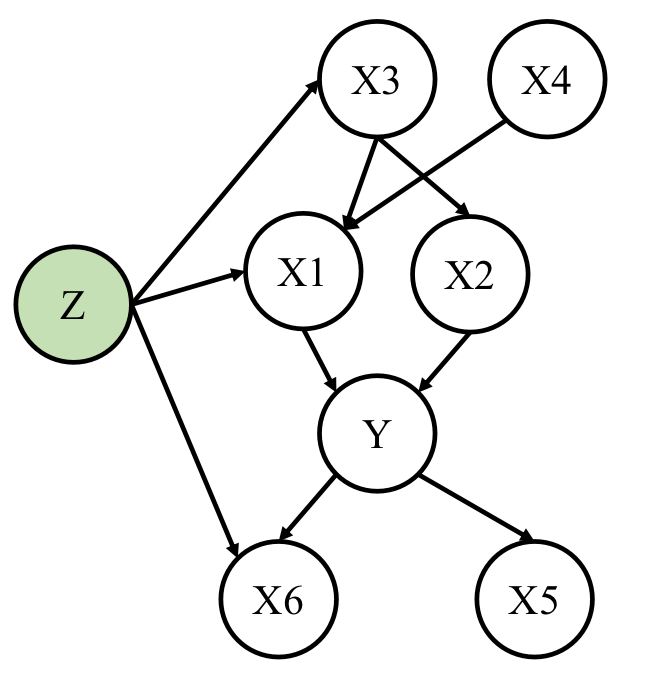}
    \includegraphics[scale=0.3]{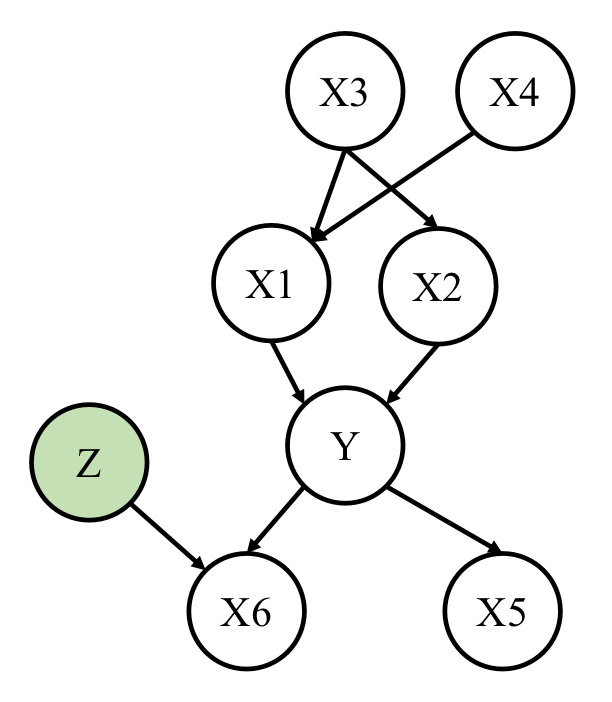}
    \includegraphics[scale=0.3]{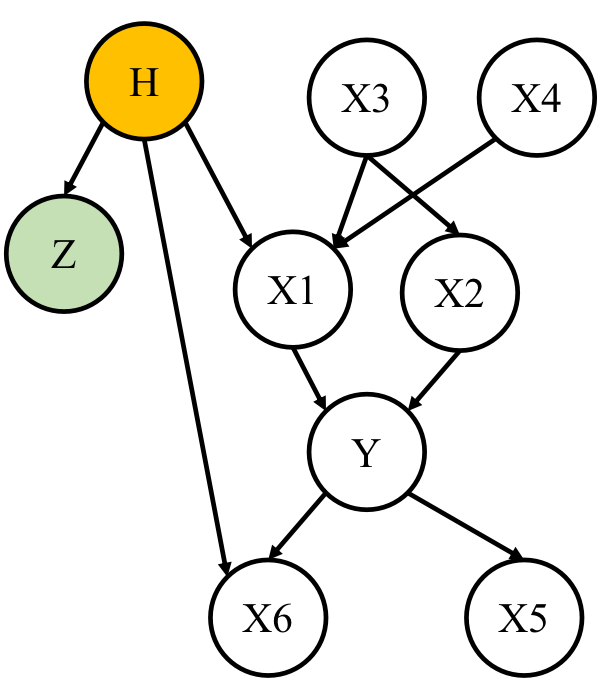}
    \caption{{ Examples of valid choices of $Z$ satisfying Condition \ref{cond:invariance}. Here ``H'' in the 4th graph denotes some hidden confounders. The invariant features are $X_1$ and $X_2$,   direct causes of $Y$.}}
    \label{fig:feasible_zs}
\end{figure}

%\subsection{Checking Condition \ref{cond:invariance}}
% We start with an example in Fig.~\ref{fig:Z_example}. When we collect a sample (e.g., taking a photo), there often exists a set of meta information (e.g., the time slot, coordinates, temperature). Consider the label of the image (e.g., ``camel'') is generated by the causal graph on the right panel of Fig.~\ref{fig:Z_example}. The invariant feature $\bX_v$ are $X_1$ and $X_2$ in this case. The meta information is used as $Z$, which is colored in green in the causal graph. In this example, node $Z$ has correlations with some nodes in the causal system. This  $Z$ satisfies  Condition~\ref{cond:invariance} because $H(Y|X_1, X_2) =H(Y|X_1, X_2, Z) $ or equivalently $Y\perp Z|X_1, X_2$. This example gives us some evidence of choosing $Z$: $Z$ and $Y$ should be d-separated by $\bX_v$. There are  other examples satisfying Condition~\ref{cond:invariance}, as shown in Fig.~\ref{fig:feasible_zs}. 

% We also illustrate some failure choices of  $Z$, illustrated in Fig.~\ref{fig:infeasible_z}. Specifically, when $Z$ is a child of $Y$ or a direct cause of $Y$, $Z$ contains additional information of $Y$ conditional on $X_1$ and $X_2$. Then Condition~\ref{cond:invariance} is violated and this kind of information cannot be used  in our framework.

\begin{figure}[H]
    \centering
    \includegraphics[scale=0.3]{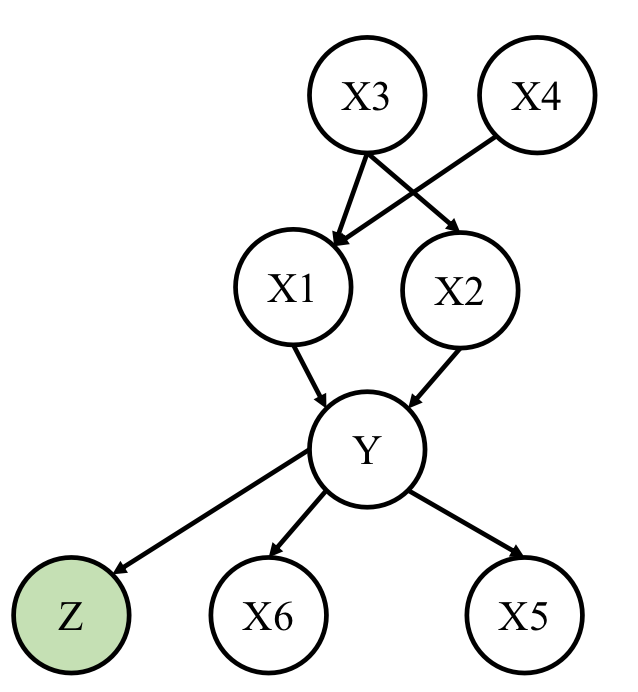}
    \includegraphics[scale=0.3]{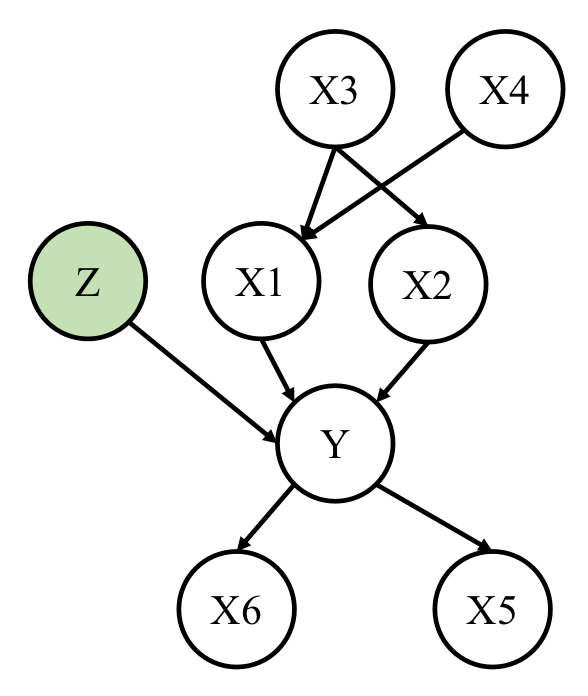}
    \caption{{Invalid choices of $Z$ that violate Condition \ref{cond:invariance}. The invariant features are $X_1$ and $X_2$, the direct causes of $Y$.}}
    \label{fig:infeasible_z}
\end{figure}

\section{Societal Impact}
\label{sec:impact}
In this work, we propose to utilize the auxiliary information to aid the invariance learning without environmental indexes. This method is also helpful to fairness issues; see, e.g., the discussion about out-of-distribution generation and algorithmic fairness in \cite{creager2021environment}. For the additional information, we should also avoid using demographic, private, and sensitive information.

\end{document}